\documentclass[pdflatex]{sn-jnl}
\usepackage{silence}
\usepackage{graphicx}%
\usepackage{multirow}%
\usepackage{amsmath,amssymb,amsfonts}%
\usepackage{amsthm}%
\usepackage{mathrsfs}%
\usepackage[title]{appendix}%
\usepackage{xcolor}%
\usepackage{textcomp}%
\usepackage{manyfoot}%
\usepackage{booktabs}%
\usepackage{algorithm}%
\usepackage{algorithmicx}%
\usepackage{algpseudocode}%
\usepackage{listings}%
\usepackage{caption}
\usepackage{booktabs}
\usepackage{subcaption}
\usepackage{xr}
\usepackage{siunitx}
\usepackage[export]{adjustbox}
\usepackage{threeparttable}
\usepackage[backend=biber, style=numeric]{biblatex}
\usepackage{hyperref}  
\usepackage{xr-hyper} 

\theoremstyle{thmstyleone}%
\newtheorem{theorem}{Theorem}%

\theoremstyle{thmstyletwo}%

\theoremstyle{thmstylethree}%

\newcommand{\beginsupplement}{%
  \setcounter{section}{0}
  \setcounter{subsection}{0}
  \setcounter{table}{0}
  \setcounter{figure}{0}
  \setcounter{equation}{0}

  \renewcommand{\thesection}{S\arabic{section}}
  \renewcommand{\thesubsection}{S\arabic{section}.\arabic{subsection}}
  \renewcommand{\thetable}{S\arabic{table}}
  \renewcommand{\thefigure}{S\arabic{figure}}
  \renewcommand{\theequation}{S\arabic{equation}}
}

\begin{document}

\begin{refsection}
\title[]{From Vision to Decision: Neuromorphic Control for Autonomous Navigation and Tracking}

\author*[1]{\fnm{Chuwei} \sur{Wang}}\email{cw996@cornell.edu}

\author[2]{\fnm{Eduardo} \sur{Sebastián}}\email{es2121@cam.ac.uk}

\author[2]{\fnm{Amanda} \sur{Prorok}}\email{asp45@cam.ac.uk}

\author*[1]{\fnm{Anastasia} \sur{Bizyaeva}}\email{anastasiab@cornell.edu}

\affil*[1]{\orgdiv{Sibley School of Mechanical and Aerospace Engineering}, \orgname{Cornell University}, \orgaddress{\city{Ithaca}, \postcode{14853}, \state{New York}, \country{USA}}}

\affil[2]{\orgdiv{Department of Computer Science and Technology}, \orgname{University of Cambridge}, \orgaddress{\city{Cambridge}, \postcode{CB3 0FD}, \country{United Kingdom}}}

\abstract{Robotic navigation has historically struggled to reconcile reactive, sensor-based control with the decisive capabilities of model-based planners. This duality becomes critical when the absence of a predominant option among goals leads to indecision, challenging reactive systems to break symmetries without computationally-intense planners. We propose a parsimonious neuromorphic control framework that bridges this gap for vision-guided navigation and tracking. Image pixels from an onboard camera are encoded as inputs to dynamic neuronal populations that directly transform visual target excitation into egocentric motion commands. A dynamic bifurcation mechanism resolves indecision by delaying commitment until a critical point induced by the environmental geometry. Inspired by recently proposed mechanistic models of animal cognition and opinion dynamics, the neuromorphic controller provides real-time autonomy with a minimal computational burden, a small number of interpretable parameters, and can be seamlessly integrated with application-specific image processing pipelines. We validate our approach in simulation environments as well as on an experimental quadrotor platform.}

\keywords{}

\maketitle

\section{Introduction}\label{sec1}

Autonomous multi-goal navigation and tracking have long been challenged by the architectural tension between distal and proximal approaches \cite{harnad1990symbol}. The distal perspective focuses on the object-centric layout of the environment, requiring the agent to maintain an internal model of objects and locations that are spatially removed from the sensors \cite{nillson1984shakey, laird1987soar}. In contrast, the proximal approach anchors the agent’s reality in the immediate and reactive sensory stimulation, coupling raw input streams directly to motor actions without an intermediate representation \cite{brooks1991intelligence, braitenberg1986vehicles}. Historically, efforts to address decision-making have centered on distal, model-based methods, which operate on a reconstructed ``world of objects'' via hierarchical perception, mapping, and planning stages \cite{Mohta18, Tordesillas19Faster, Zhou21TRO}. While interpretable, these distal architectures are computationally intensive and often suffer from a disconnect between the internal model and the immediate reality, making their performance sensitive to model mismatches. Conversely, proximal approaches rely on direct sensorimotor couplings, reacting to the ``stream of sensation'' rather than a map. However, purely proximal agents struggle with indecision \cite{koren1991potential, grover2020does, Reis21CSL, grover2023before}.  When an agent faces multiple equally or similarly viable options, like in symmetrical scenarios, immediate sensory data is ambiguous, trapping the robot in local minima or deadlocks. The challenge, therefore, is to achieve the reliable decision-making of distal systems with the efficiency of proximal methods.

In the pursuit of enhancing proximal control, recent data-driven advances have successfully reduced the dependence on high-resolution maps or precise localization, pushing the boundaries of what purely sensor-based agents can achieve and making them better suited for dynamic environments and lightweight platforms \cite{ Kaufmann23Nature, Geles2024a, verraest2025skydreamer, Pfeiffer17ICRA, Zhu17ICRA, Devo20TRO, Zhang_2025}. While these learning-based methods excel in scenarios like drone racing or end-to-end visual navigation, they typically rely on identifying statistical correlations in the training data rather than resolving the underlying geometric ambiguity. Consequently, they often lack theoretical guarantees for breaking indecision when confronted with conflicting objectives such as symmetric goals, and struggle to provide the fast decision-making required for robust real-time autonomy. In this paper, we present a reliable, efficient neuromorphic control framework for vision-guided navigation and tracking that generates interpretable proximal decision-making in such indecisive environments, achieving planner-like behavior while operating reactively on vision streams. 

The proposed control framework draws direct inspiration from the dynamic cognitive mechanisms employed by animals during navigation. In contrast to artificial systems, animals exhibit rapid, flexible decision-making when moving through space \cite{Seelig2015NeuralDF, sridhar2021geometry,oscar2023simple}. When doing so, they use locally available sensory data and are able to navigate complex, unfamiliar environments using minimal computational effort.  Over recent decades, neuroscience has made substantial progress in understanding how sensory signals relate to movement in a variety of animals, from fruit flies to vertebrates \cite{nordstrom2009feature, nicholas2021facilitation}. These insights gave rise to interpretable mathematical models of cognitive computations during spatial movement with direct implications for robotics \cite{novo2024neuromorphic}.  

The ring-attractor model \cite{Seelig2015NeuralDF, Kakaria17, Sung17sci} has emerged as particularly important for understanding spatial navigation. This model encodes a ring-shaped neuronal connectivity pattern with local excitation and global inhibition which supports the formation of stable activity ``bumps'' that can encode continuous variables such as heading direction or position. Its principles have also been applied in robotics---for example, in \cite{Rivero-Ortega23} the authors implemented a ring-attractor-based scheme to guide robots to target locations while avoiding static and dynamic obstacles, and in \cite{knowles2023ring} the authors used coupled spiking ring attractors to perform path integration and maintain an internal estimate of a robot's position. However, these theoretical frameworks typically rely on abstract state variables and do not address the fact that, in robotics, information originates from the rich stream of data from an onboard camera, requiring a dedicated processing stage to extract target evidence from raw vision.  

Beyond their representational capability, ring attractor-like structures in perceptual decision circuits may explain how animals robustly avoid symmetry-induced indecision during navigation. Recently, Sridhar et al. investigated how animals navigate towards multiple targets, such as food sources \cite{sridhar2021geometry}. The authors found that, remarkably, animals break down multi-alternative decisions into sequences of binary decisions that are characterized by sharp changes in animals' spatial trajectories observed in experiments. To explain this phenomenon mechanistically, the authors developed a statistical physics model that encodes a ring attractor-like structure and maps ensembles of spin variables to agent trajectories in space, successfully recovering the behavioral patterns observed in experiments with minimal parameter fitting. Critically, the decision points in the animals' paths correspond to phase transitions in this model, or equivalently, bifurcations in its mean field approximation \cite{gorbonos2024geometrical}. This explanation provides a principled link between the dynamics of neural circuits and discrete choice behavior in continuous space, suggesting an elegant solution to the tension between proximal and distal approaches in robotic navigation that is grounded in cognitive and dynamical principles.

In our proposed neuromorphic control framework, inspired by the insights and modeling efforts of Sridhar et al. \cite{sridhar2021geometry} as well as by recent investigation into dynamic symmetry-breaking mechanisms in multi-alternative decision-making \cite{franci2023breaking,leonard2024fast}, we leverage a simple cognitive circuit that exhibits bifurcations to resolve symmetry-induced movement indecision in autonomous navigation.  Recent literature provides strong evidence for the promise of leveraging properties of bifurcations in decision circuits for the design of flexible and robust autonomous behavior. In related work, the recently developed Nonlinear Opinion Dynamics (NOD) framework provides a complementary dynamical systems perspective on resolving indecision in multi-alternative decision-making through the lens of bifurcation theory~\cite{bizyaeva2022nonlinear,leonard2024fast}. 
Notably, NOD models are closely related in structure to the mean field model of spatial decision-making recovered in the work of Sridhar et al. \cite{sridhar2021geometry}, sharing the same fundamental symmetry-breaking mechanism. Recently the NOD framework has been successfully applied to resolve indecision across a variety of robotic applications. For instance, in \cite{Cathcart23iros}, NOD is used to model the preference for left or right in a human-robot passing problem to guarantee deadlock breaking and prevent oscillatory behavior. In \cite{Varma25ecc}, the authors combined NOD with control barrier functions to reduce deadlocks in dense environments, while \cite{Diego25AVOCADO} integrated NOD with a velocity obstacle formulation for collision avoidance in multi-agent environments. In \cite{amorim2024spatially}, a continuous-option extension of NOD with a translation-symmetric kernel reminiscent of ring attractor structure is used to decide on a navigation target from a continuum of inputs. In our control framework, analogously to NOD models, we will take a deterministic, dynamical systems modeling approach as opposed to the probabilistic, statistical physics approach of the animal behavior modeling in \cite{sridhar2021geometry,gorbonos2024geometrical}.

The primary contributions in the presented work are the following. Building on mechanistic principles of dynamic decision-making \cite{sridhar2021geometry, gorbonos2024geometrical, bizyaeva2022nonlinear, leonard2024fast}, we propose a neuromorphic control framework that effectively reconciles the proximal-distal dichotomy for vision-guided navigation.
Our contribution is the development of a perception-to-control architecture in which high-dimensional camera inputs are encoded as a population of neural activations. By embedding the decision-making bifurcation directly into the sensorimotor loop, this approach allows visual target evidence to be transformed into egocentric motion commands, granting the agent the decisiveness of a distal planner while operating purely on proximal sensory streams. This neural dynamics formulation endows the agent with rapid symmetry-breaking, robust target selection, and low-cost autonomy using only onboard visual sensing. Furthermore, this formulation enables interpretable neural dynamics models to operate on image-derived stimuli and supports plug-and-play integration of diverse image-processing front-ends. Additionally, the proposed neural dynamic control logic relies on simple continuous-time computations with saturating nonlinearities, making it compatible with future analog implementation in neuromorphic subthreshold CMOS circuits \cite{indiveri2011neuromorphic,poon2011neuromorphic,mead2012analog}. This opens up a clear pathway towards implementation of efficient embodied decision-making in neuromorphic robotic hardware as an avenue of future inquiry, complementary to recent efforts in neuromorphic approaches to autonomous navigation \cite{mitchell2017neon,abdelrahman2025neuromorphic,shangfully,novo2024neuromorphic,schoepe2024finding}. We illustrate effectiveness of our approach in multi-target navigation and tracking scenarios in simulation environments and in hardware quadrotor experiments.

The proposed neuromorphic control framework is illustrated in Fig.~\ref{fig:pipeline}. An autonomous robotic system, designed for fully onboard perception and computation, acquires RGB images in real-time. A user-defined target detector processes the incoming image and outputs a matrix where each element, associated to a pixel, is assigned a value representing the strength of target evidence. Depending on the considered visual features, the target detector can be based on color, such as Hue-Saturation-Value (HSV) masks; movement, such as optical flow filters; or more complex combinations of pre-processed features, such as those provided by YOLO modules \cite{redmon2016you}. In addition to a value of target evidence, each pixel is assigned a population of neurons, whose state evolves according to a discretization of a continuous-time dynamical system model of decision-making that describes the temporal evolution of firing rates in interconnected groups of neurons. Therefore, the combination of all pixels provides a collection of neuron populations whose 
neural activity, driven by the strength of target evidence, represents the relative preference for moving in each direction within the field of view (FOV) of the robot. A direction is a unit vector that starts at the origin of the egocentric robot body frame and points in the direction of the pixel. The velocity command, again in the egocentric robot body frame, is the combination of the directional vectors weighted by the relative preference from the neural activity, hence steering the robot in the direction of preference according to the image stimuli. Following principles of cognitive dynamics and empirical evidence on neural activity thresholding \cite{forstmann2016sequential, ratcliff2016diffusion, roxin2008neurobiological}, the computation of the velocity command only considers a subset of neural populations whose neural activity is beyond a user-defined decision threshold. Finally, a low-level controller transforms egocentric velocity into the actions executed by the robot.

\begin{figure}[htbp]
    \centering
    \includegraphics[width=0.98\linewidth]{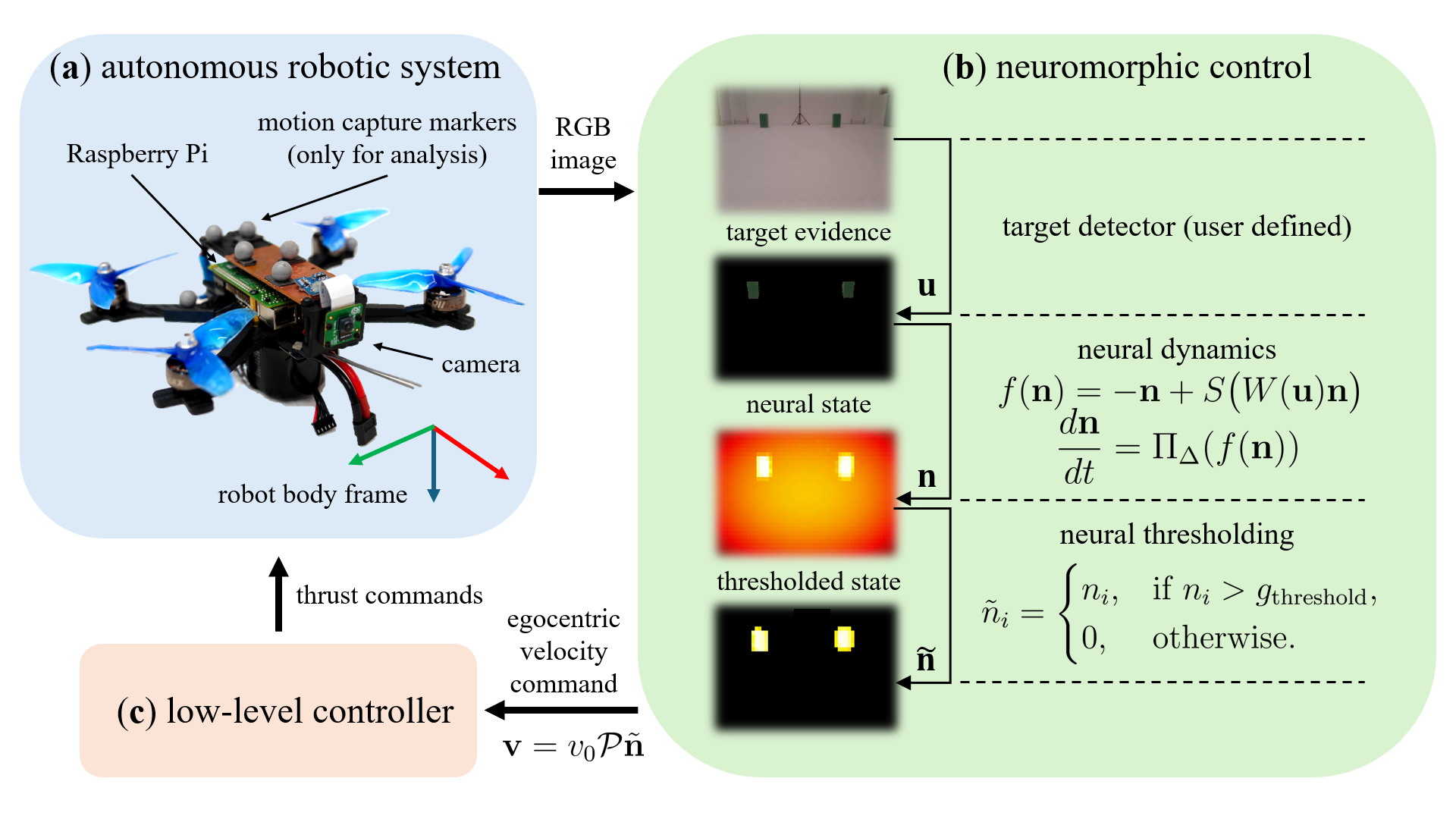}
    \caption{\textbf{Overview of the neuromorphic control framework}. \textbf{a} We consider an autonomous robotic system designed to run perception and computation onboard. In this case, a quadrotor is equipped with an RGB camera and a Raspberry Pi as its sole processing unit. The motion capture markers are included for completeness, as they are used to record data for statistical trajectory analysis only. \textbf{b} The  neuromorphic controller processes the image stream to extract relevant target features, translated into a value per pixel that models the strength of the feature. This is used as input to the neural dynamics, that evolve the relative preference over per-pixel spatial directions, yielding a velocity command in the egocentric body frame of the robot as a combination of those directions whose activity is above a threshold. \textbf{c} The velocity command is transformed into action commands that steer the robot towards the direction of preference according to the visual stimuli.}
    \label{fig:pipeline}
\end{figure}

The core component of the neuromorphic framework is the neural dynamics modeling the strength of preference for spatial directions, leading to unique properties as a direct and necessary consequence of its bio-inspired structure. Each pixel’s contribution is processed by its corresponding neural population, enabling the system to respond to the overall visual evidence without requiring object classification or tracking. The controller only needs to know which regions of the image contain potential targets, allowing a wide range of detection methods to be used. This per-pixel design naturally supports arbitrary numbers and configurations of targets without modifying the underlying neural dynamics.
The neural dynamics is not a passive filter but an active, nonlinear decision-making system. Its formulation is inspired by cognitive models of spatial choice in animals \cite{sridhar2021geometry,gorbonos2024geometrical} and collective computation with nonlinear opinion dynamics \cite{bizyaeva2022nonlinear,leonard2024fast}, which mathematically guarantee the existence of a bifurcation. This dynamic bifurcation is the central mechanism for resolving indecision. As the robot moves, it reaches a physical location that corresponds to a critical point at which its current heading decision becomes unstable, creating a spatial indecision region where the collective neural state becomes ultra-sensitive. In this state, the entire system is forced to rapidly commit to a single, stable decision state (e.g., ``go left'' vs. ``go right''). This process robustly breaks symmetries and avoids deadlocks or oscillatory states. Thus, the neuromorphic controller is characterized by a proximal model that reacts to the image stream yet has the decision-making capabilities of a planner without relying on map representations. Furthermore, the continuous-time dynamics provide an intrinsic memory term, meaning that neural activity integrates visual evidence over a short time window rather than reacting instantaneously. This temporal integration is the source of its temporal filtering capabilities, naturally filtering high-frequency sensor noise such as corrupted video frames, and allowing the robot to maintain its course even if a target is temporarily lost from view. The neural dynamics model is both parsimonious, being governed by a few interpretable parameters, and computationally lightweight, enabling real-time performance. Finally, by completely decoupling the perception front-end (which merely populates the target evidence matrix) from the core dynamics, the framework is compatible with any visual processing pipeline.

\section{Results}\label{sec2}

In this section, we showcase the performance of our neuromorphic bifurcation-guided control strategy in multi-target navigation and tracking scenarios. 
First, we contrast the neuromorphic strategy against several baseline methods in idealized two-dimensional multi-target environments with global position knowledge (Section~\ref{Sec2_1}). We show that the neuromorphic strategy yields interpretable decision-making even in environments with symmetries where proximal approaches fail, at a fraction of the computational cost of distal and end-to-end methods. 
Next, we demonstrate the decision-making bifurcation and sequential navigation capability of our neuromorphic strategy with idealized visual inputs in two-dimensional multi-target navigation scenarios (Section~\ref{Sec2_2}).
We then demonstrate the complete control pipeline in both a photorealistic simulation environment (Section~\ref{Sec2_3}) and real-world hardware experiments (Section~\ref{Sec2_4}). 

\subsection{The geometry of robot spatial decision-making}\label{Sec2_1}

We first illustrate that symmetric environments are challenging for robotic navigation, which motivates our approach. To do this, we conduct a series of simulation experiments to contrast the spatial trajectories generated by our proposed neural dynamics (ND) framework against several state-of-the-art navigation algorithms in simplified two-dimensional multi-goal environments. The construction of our simulation environments takes inspiration from the animal decision-making experiments performed by Sridhar et al. in \cite{sridhar2021geometry}. The objective is to evaluate each method's ability to make a decision and subsequently reach one of the available goals. We compared the ND framework with model predictive control (MPC), potential field (PF), and reinforcement learning (RL) baselines (see Supplementary Materials Sec.~1
for a detailed formulation). 

The robot’s configuration at time~$t$ is represented as $z_t = [x, v_x, y, v_y]^{\top}$, where $\mathbf{p}_t = [x, y]^{\top}$ denotes its position. Each target’s position is denoted by $\mathbf{g}_i = [x_{t,i}, y_{t,i}]^{\top}$. In these simulations, we assume that the robot has access to its own global position as well as the positions of all targets, i.e. a global map and accurate localization. However, the robot’s state is perturbed at every time step by additive zero-mean Gaussian noise with standard deviation $\sigma_z = 0.01$, reflecting some localization uncertainty.
Reflecting this global knowledge, the ND model includes exactly $K$ competing neural populations in environments with $K$ targets, with each neural activity variable $\bar{n}_i$ encoding the magnitude of relative preference for target $i$ over other targets. The neural activity rates $\bar{\mathbf{n}}$ are dynamically updated based on the relative target positions, for example the target matrix $\bar{\mathcal{P}} = [\bar{\mathbf{p}}_1, \bar{\mathbf{p}}_2] = \left[\frac{(\mathbf{g}_1 - \mathbf{p}_t)^{\top}}{\lVert \mathbf{g}_1 - \mathbf{p}_t \rVert}, \frac{(\mathbf{g}_2 - \mathbf{p}_t)^{\top}}{\lVert \mathbf{g}_2 - \mathbf{p}_t \rVert}\right]$ contains unit vectors pointing from the agent to targets 1 and 2,   when $K = 2$. The neural dynamics are expressed as 
\begin{equation}
\frac{d \bar{\mathbf{n}}}{dt} = \Pi_{\Delta} \big(-\bar{\mathbf{n}} + S(\bar{\mathcal{P}}^{\top}\bar{\mathcal{P}}\bar{\mathbf{n}}) \big)
\label{Eq:benchmarkmodel}
\end{equation}
where $S: \mathbb{R} \to (0,a)$ is a saturating function applied element-wise to its input vector and $\Pi_{\Delta}: \mathbb{R}_K^{+} \to \Delta_{K-1}$ maps the neural activity to the unit simplex -- see Methods (Section~\ref{sec11}) for further details. Instantaneous velocity of the agent is a weighted average of the target orientations  relative to the moving agent $\mathbf{v} = v_0 \bar{\mathcal{P}} \bar{\mathbf{n}} $.

For MPC, we implemented two variants: a weighted-sum cost formulation and a soft-minimum formulation designed to mitigate indecision among multiple targets. For RL, we trained a multi-goal navigation policy with different parameterizations of the maximum speed, acceleration and observation structure and then performed a statistical analysis to ensure the selected policy was not basing its decision on any overlooked bias induced during training (e.g., the robot going always to the first target appearing in its observation vector instead of going to the closest one). The chosen policy was deployed with a different order of observations for the targets, allowing us to examine how goal-sequence consistency influences learned decision-making behavior.  

We tested MPC1 (Weighted-sum), MPC2 (Soft-min), PF, RL and ND methods in three different geometric target settings as shown in Fig.~\ref{fig:benchmark}. MPC1 and PF failed to complete the task in two of the environments (first and third row), as both were trapped in local minima and could not escape without parameter retuning, as a consequence of symmetry presence. In contrast, MPC2, RL and ND model successfully reached a target across all environments. In symmetric environments, where the robot starts equidistant from the available targets (Fig.~\ref{fig:benchmark}, first and second rows), MPC2 produced optimal trajectories, exhibiting an even split between the equivalent goals. The RL policy was also capable of reaching the targets; however, its behavior depended strongly on the order in which targets were presented in its observation vector. Regardless of the robot’s initial position and the efforts to avoid internal biases, the policy consistently favored one of the targets, indicating a reliance on input sequence rather than geometric proximity when the targets are approximately equidistant. Consequently, the RL trajectories were unevenly distributed and often suboptimal despite the symmetric configurations. Supported by the statistical analysis over RL policies (see Supplementary Materials Sec.~1.6), we conjecture that perfect symmetrical weight distribution in the neural network cannot be achieved, so the RL policy relies on these small imbalances to navigate symmetrical configurations. In contrast, the ND model exhibited balanced and unbiased decision-making, with approximately equal selection frequencies among the symmetric goals. Also, the ND model naturally decomposes multi-choice scenarios into a series of binary spatial-temporal decisions as the trajectory map shown in three symmetric target setting, replicating the decision-making behavior observed in animals and previously modeled using stochastic coupled spin models grounded in statistical physics \cite{sridhar2021geometry,gorbonos2024geometrical}.

\begin{figure}[htbp]
  \centering
  \setlength{\tabcolsep}{4pt}   
  \begin{tabular}{cccc}
    \begin{subfigure}[t]{0.22\textwidth}\centering
      \includegraphics[width=\linewidth]{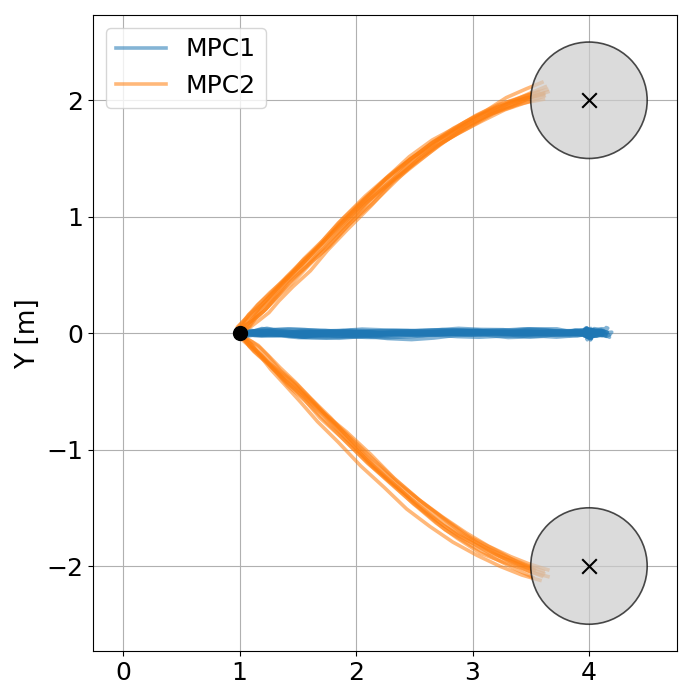}
      \label{fig:benchmarktraj1}
    \end{subfigure} &
    \begin{subfigure}[t]{0.22\textwidth}\centering
      \includegraphics[width=\linewidth]{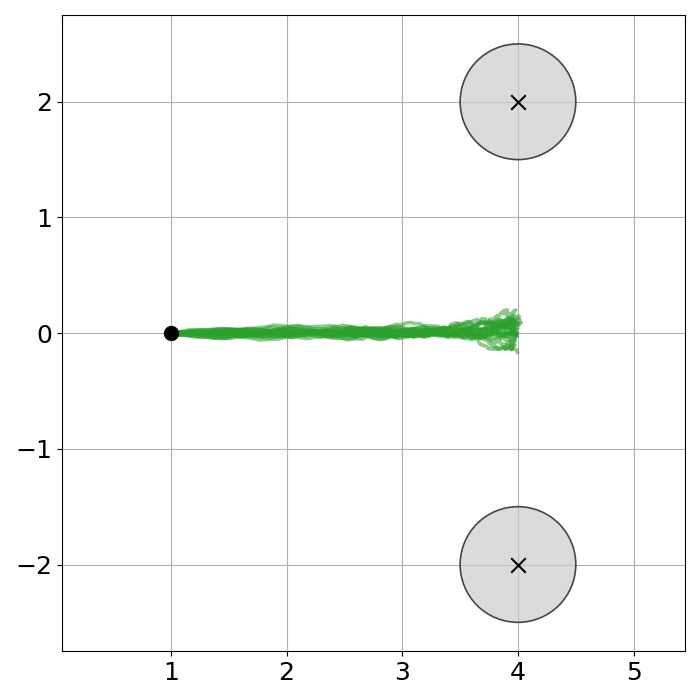}
      \label{fig:benchmarktraj2}
    \end{subfigure} &
    \begin{subfigure}[t]{0.22\textwidth}\centering
      \includegraphics[width=\linewidth]{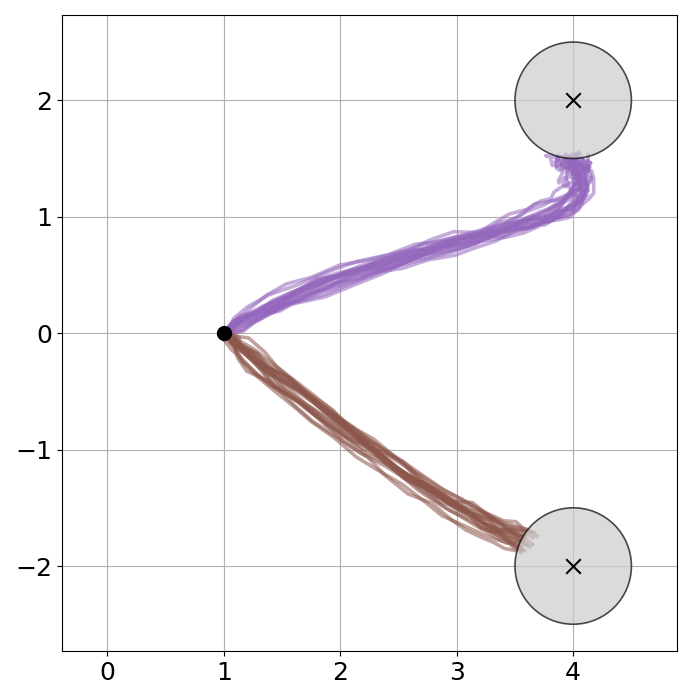}
      \label{fig:benchmarktraj3}
    \end{subfigure} &
    \begin{subfigure}[t]{0.22\textwidth}\centering
      \includegraphics[width=\linewidth]{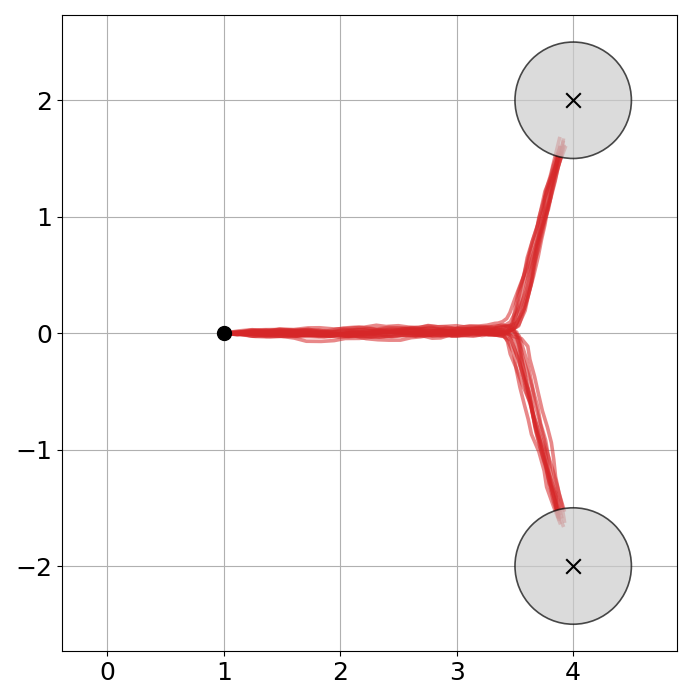}
      \label{fig:benchmarktraj4}
    \end{subfigure} \\
    \begin{subfigure}[t]{0.22\textwidth}\centering
      \includegraphics[width=\linewidth]{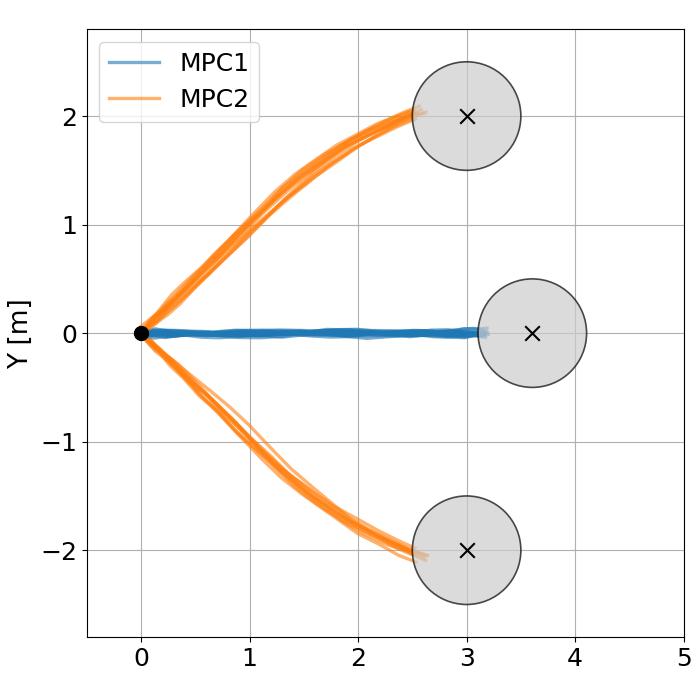}
      \label{fig:Benchmarktraj2_1}
    \end{subfigure} &
    \begin{subfigure}[t]{0.22\textwidth}\centering
      \includegraphics[width=\linewidth]{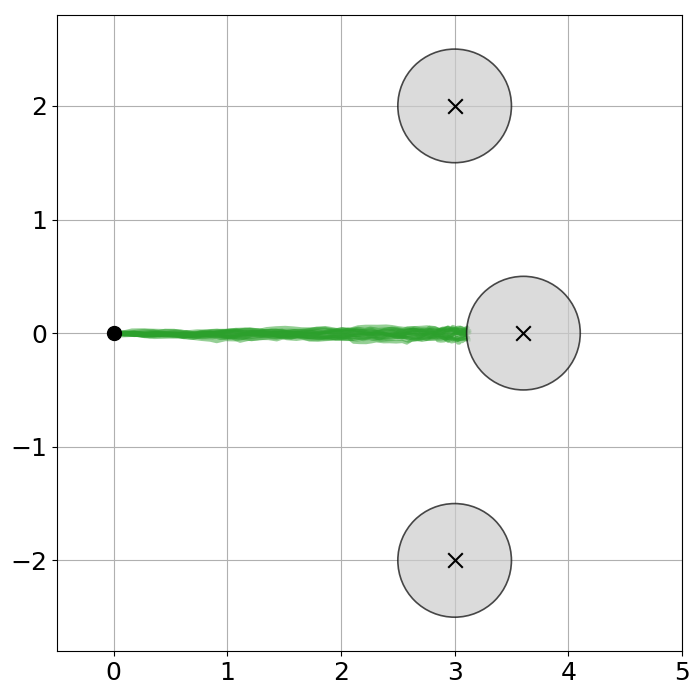}
      \label{fig:Benchmarktraj2_2}
    \end{subfigure} &
    \begin{subfigure}[t]{0.22\textwidth}\centering
      \includegraphics[width=\linewidth]{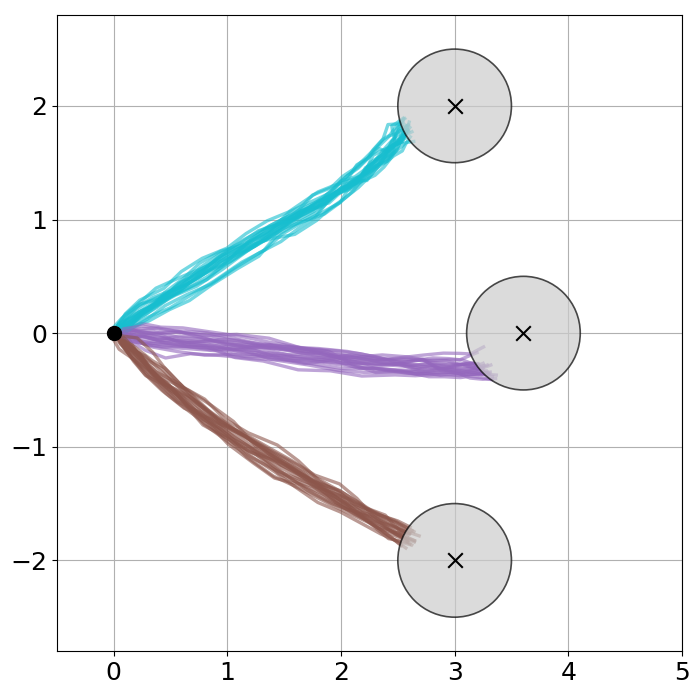}
      \label{fig:Benchmarktraj2_3}
    \end{subfigure} &
    \begin{subfigure}[t]{0.22\textwidth}\centering
      \includegraphics[width=\linewidth]{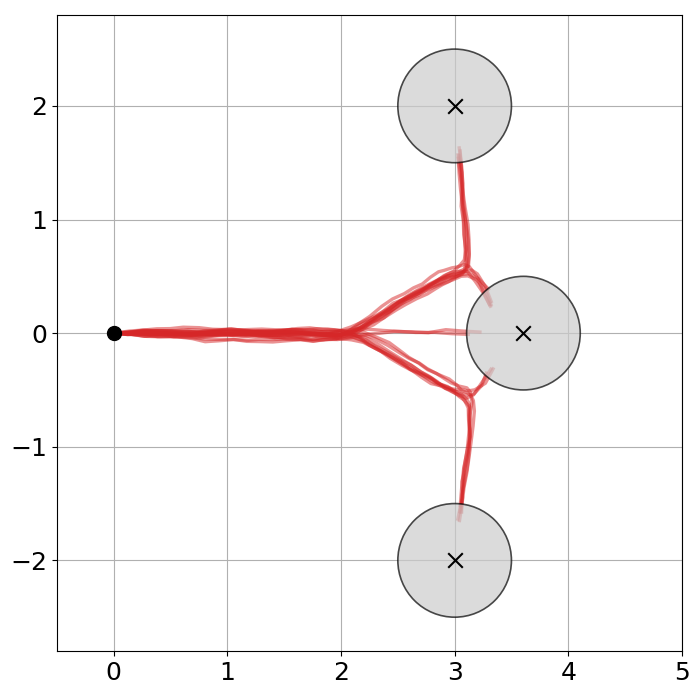}
      \label{fig:Benchmarktraj2_4}
    \end{subfigure} \\
    \begin{subfigure}[t]{0.22\textwidth}\centering
      \includegraphics[width=\linewidth]{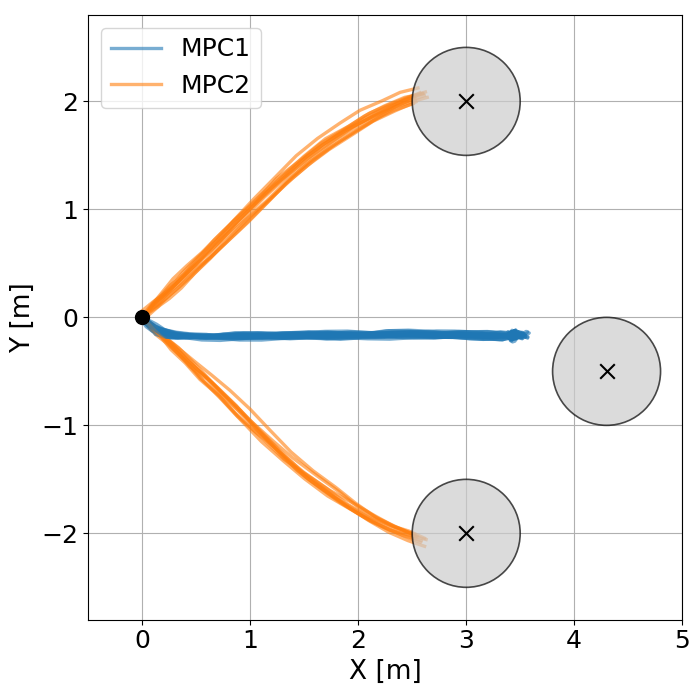}
      \subcaption*{MPC}
      \label{fig:Benchmarktraj3_1}
    \end{subfigure} &
    \begin{subfigure}[t]{0.22\textwidth}\centering
      \includegraphics[width=\linewidth]{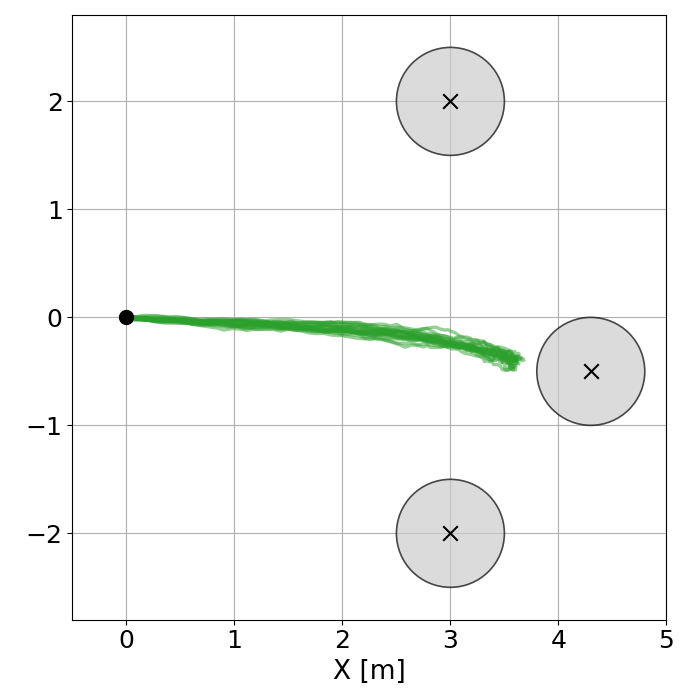}
     \subcaption*{PF}
      \label{fig:Benchmarktraj3_2}
    \end{subfigure} &
    \begin{subfigure}[t]{0.22\textwidth}\centering
      \includegraphics[width=\linewidth]{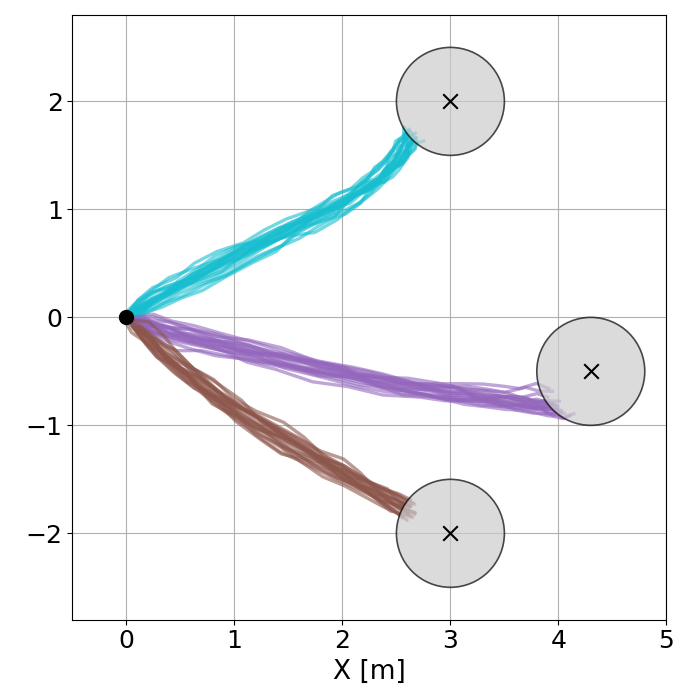}
      \subcaption*{RL}
      \label{fig:Benchmarktraj3_3}
    \end{subfigure} &
    \begin{subfigure}[t]{0.22\textwidth}\centering
      \includegraphics[width=\linewidth]{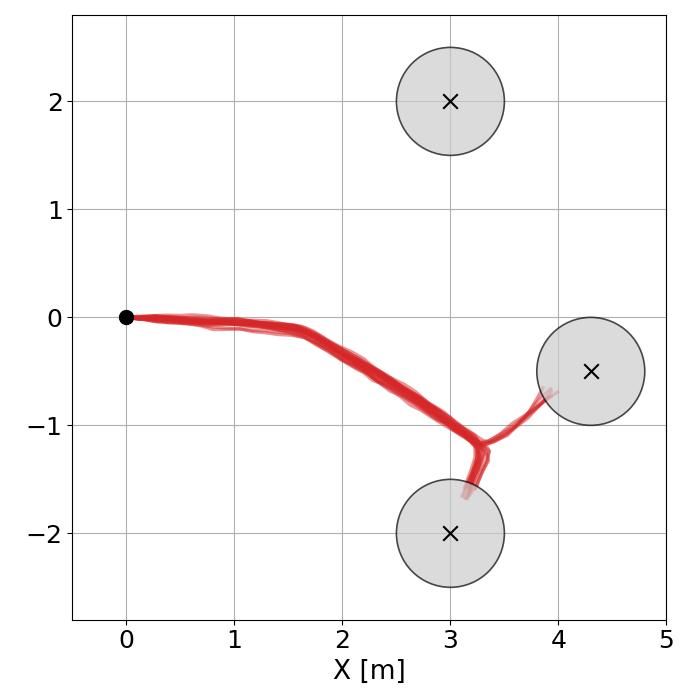}
      \subcaption*{ND}
      \label{fig:Benchmarktraj3_4}
    \end{subfigure}
  \end{tabular}
    \caption{\textbf{The geometry of decision-making for MPC, PF, RL and our method.} Each subplot shows $20$ simulated trajectories of the robot navigating to one of the targets (indicated by gray filled disks of radius $\epsilon = 0.5$ m), starting from the origin. The four planners are tested across three environments to evaluate their ability to break symmetry and exploit environmental geometry, from top to bottom: symmetric two-goal, symmetric three-goal, and asymmetric three-goal. For MPC (left column), two formulations are compared: MPC1 (weighted-sum, blue) and MPC2 (soft-min, orange). For RL, the same policy is used but with different target orderings provided during simulation, where each color corresponds to a distinct goal sequence as observations. For ND, the saturation  bound parameter is $a=2$ and the coupling gain is $\alpha = 6$.  }
    \label{fig:benchmark}
\end{figure}

We also evaluated an asymmetric environment, where the robot faced three targets: two positioned symmetrically and a third placed closer to the lower target group (Fig.~\ref{fig:benchmark}, last row). This configuration introduces an intentional asymmetry that created a natural bias toward the lower region and the objective is to assess whether each planner can exploit this asymmetry when making a decision. A desirable behavior is one in which the planner demonstrates a consistent preference for the lower target region, reflecting sensitivity to the environment’s geometric structure. We found that MPC2 has a near-even split between two symmetric ones since their distance to the initial position is  closer. We see the same phenomenon in the RL policy. In contrast, the ND model consistently selects the lower target group, demonstrating a consistent preference aligned with the spatial asymmetry. As shown in Fig.~\ref{fig:benchmark}, all trajectories converge toward the lower two targets indicating that the model effectively exploits the spatial asymmetry rather than relying on noises or randomization. We collected the mean and standard deviation of per-step computation times for all planners in two-goal and three-goal setting (see Table S\ref{tab:comp_time1}-\ref{tab:comp_time2}).  The ND model exhibits extremely low and consistent computational costs---on the order of $0.1$ms per step---which is 3-4 orders of magnitude faster than MPC planners and $5\times$ faster than the RL policy.

Beyond its ability to break symmetry and exploit geometric asymmetries for decision-making, Fig.~\ref{fig:benchmark} also shows that the ND model does not commit to a target immediately, unlike MPC2 and the RL policy. Instead, it delays the decision until the trajectory reaches the bifurcation point, where sufficient evidence has accumulated to support a reliable choice. This behavior is especially illuminating when transitioning from GPS-based to vision-based inputs (as will be studied in Secs. \ref{Sec2_2}-\ref{Sec2_4}). When the robot is far from the targets, visual cues are weak---targets appear small, blurry, and provide limited information about their relative importance. As the robot moves closer, the visual evidence becomes richer and more reliable, naturally guiding the decision process. This makes the ND model particularly well suited for real-time autonomy, as it remains reactive to the environments while maintaining lightweight computation.

\subsection{Vision-based decision-making on the move}
\label{Sec2_2}
In the simulation studies in the previous section, the ND model used global coordinates of targets to decide on a movement direction. Next, we introduce the role of perception and evaluate the proposed neuromorphic control framework using analytically generated visual inputs in 2D plane. Instead of rendering raw images, for this study the visual input matrix is computed directly from the robot-target geometry in real-time. The robot is equipped with a forward-facing pinhole camera with a finite FOV and targets are modeled as spheres for simplicity, although the method extends to arbitrary shapes. A spherical target within the FOV subtends an apparent half-angle
\[
\gamma_t = \arctan\!\left(\frac{r_t}{\|\mathbf{p}_t - \mathbf{p}_c\|}\right)
\approx \frac{r_t}{\|\mathbf{p}_t - \mathbf{p}_c\|},
\]
where $r_t$ and $\mathbf{p}_t$ are the target’s radius and center, and $\mathbf{p}_c$ is the camera center. 
In the pinhole model, each pixel $(i,j)$ corresponds to a unit ray $\hat{\mathbf o}_{ij}$ that originates from the camera center. A target  is ``visible'' in pixel $(i,j)$ if the angular deviation between $\hat{\mathbf o}_{ij}$ and the direction $\hat{\mathbf u}_t = (\mathbf{p}_t - \mathbf{p}_c)/\|\mathbf{p}_t - \mathbf{p}_c\|$ satisfies
\[
\arccos\!\big(\hat{\mathbf o}_{ij}\!\cdot\!\hat{\mathbf u}_t\big) \le \gamma_t.
\]
Using this geometric test, we populate $\mathcal{U}$ (see Section \ref{ref:model_form} in Methods) from poses, without acquiring real image streams. The resulting sensor model is idealized and highlights how the decision-making dynamics operate with high-dimensional inputs.
\begin{figure}
    \centering
    \includegraphics[width=1\linewidth]{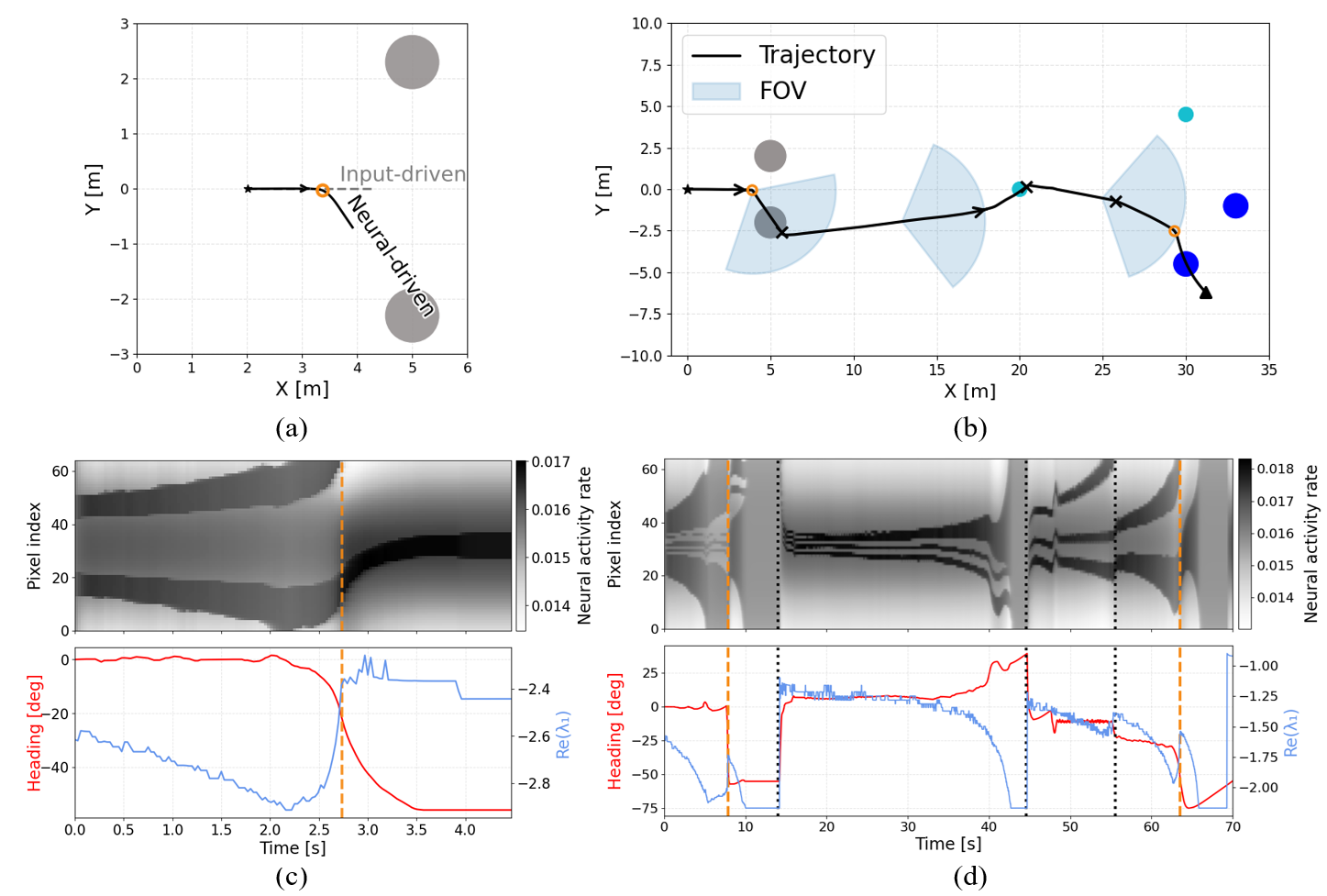}
    \caption{\textbf{Autonomous navigation with simulated visual inputs.} The robot has a FOV of $110^{\circ}$ and  neuron population $k = 64$ ($a = \frac{8}{k}$ and $\alpha = 4.2$).
    \textbf{a}  Trajectories in the symmetric setting. The red curve shows the path generated by the neural-dynamics controller, while the gray dashed curve corresponds to the trajectory driven purely by the instantaneous visual input (the $\mathcal{U}$ matrix). The circular  marker on the trajectory denotes the moment at which the neural dynamics break symmetry and commit to one of the two identical targets.
    \textbf{b}   Full goal-directed trajectory; shaded blue wedges indicate the FOV; circular targets are color-coded by radius. The orange circular  marker on the trajectory denotes bifurcation-driven  decisions while the black cross markers denote vision-driven decisions.
    \textbf{c} Top: Neural activity heatmap showing how the population response evolves over time across pixel indices. Bottom: Corresponding heading angle (relative to the $+x$ axis) and the real part of the dominant Jacobian eigenvalue $\lambda_1$. The vertical dashed line marks the same symmetry-breaking moment indicated by the circular  marker in (a), where activity concentrates onto one target band, the heading rapidly changes, and $\operatorname{Re}(\lambda_1)$ spikes. All panels share a common time axis and correspond to the neural-driven trajectory in (a).
    \textbf{d} Same panel as \textbf{c} corresponding to the trajectory in \textbf{b}.}
    \label{fig:sim_py}
\end{figure}

In the symmetric configuration shown in Fig.~\ref{fig:sim_py}a, the robot is initialized near a bifurcation point. The timescale of the neural dynamics is set to be three times faster than that of the velocity update, ensuring convergence of neural activity rates and revealing the coupling between neural and sensory processes. At this decision-making point, the neural dynamics evolve under symmetric visual evidence until the system commits to one of the alternatives. For comparison, we also simulate the trajectory driven purely by raw visual inputs, i.e., $\mathbf{v} = v_0 \mathcal{P}\mathbf{u}$ (gray dashed line in Fig.~\ref{fig:sim_py}a). While this input-driven controller simply averages between the two identical options, the neural dynamics experience a qualitative reorganization at around $2.7$s (Fig.~\ref{fig:sim_py}c, top), transitioning sharply from indecision to choice. This transition reflects the intrinsic coupling between neural state and perceived input: changes in neural activity induce abrupt shifts in the robot’s heading, which in turn reshape the incoming visual stimuli, as seen between $2$–$3\text{s}$ in Fig.~\ref{fig:sim_py}b. The parsimony of the neural dynamics allow to study the decision-making process of the robot by studying how the largest Jacobian eigenvalue $\lambda_1$ of the neural dynamics evolves with time. Ultra-sensitivity happens when $\lambda_1$ exhibits a spike (see Sec~\ref{sec:timescale_bif} in Methods for details). Accordingly, the heading direction exhibits a change in ultra-sensitivity regions, confirming the causality between visual stimuli, neural activity and robot movement.

Building on the bifurcation analysis, we synchronize the robot’s translational dynamics with the timescale of neural decision-making. As shown in Fig.~\ref{fig:sim_py}b, the robot initially faces a near symmetric configuration in which two gray targets have equal radii. In this setting, the neural dynamics exhibit rapid symmetry breaking: although both options are equally weighted, the system spontaneously commits to one and maintains a stable trajectory toward it rather than averaging or remaining indecisive.
In the asymmetric scenario, where blue targets have unequal radii, the robot preferentially selects the larger target. This bias arises because the larger target subtends a wider visual angle, activates more neurons through the visual input matrix, and consequently generates a stronger drive within the neural decision dynamics. In Fig.~\ref{fig:sim_py}b, orange circular markers denote positions where the neural dynamics trigger the decision (corresponding to a spike in $\lambda_1$), whereas black cross markers indicate visually induced decisions, such as when a target leaves the FOV or when the robot passes directly through a target.
As shown in Fig.~\ref{fig:sim_py} (bottom row), $\lambda_1$ exhibits a spike at each decision point, either driven by visual input or neural activity.

\subsection{Vision-based neuromorphic navigation and tracking}
\label{Sec2_3}
Having analyzed the decision-making mechanism of the neuromorphic framework in isolation, this section studies the framework as a whole for navigation and tracking in a photorealistic simulation environment.We use AirGen \cite{vemprala2023gridplatformgeneralrobot} as our flight simulation environment. AirGen provides environment models, virtual sensor modules, airframe dynamics, a physics engine, a rendering interface, and an API for integrating custom control modules. We used a quadrotor in all the simulations. The results of a representative navigation trial are shown in Fig.~\ref{fig:Grid_sim1}, with the full visualization provided in Video S2. The quadrotor operates in an environment populated with six static targets, indicated by green polyhedra distributed throughout the workspace. We employ a $36\times64$ neural population that encodes visual input from an onboard RGB camera with a $120^{\circ}$ FOV and $144\times256$ pixel resolution. The complete trajectory reconstructed from global state logs demonstrates a smooth, goal-directed path through the environment, as illustrated in the top-down (Fig.~\ref{fig:grid1_2}) and 3D visualizations (Fig.~\ref{fig:grid1_1}). 

To further elucidate the neural mechanisms underlying the observed behavior, Fig.~\ref{fig:grid1_3} depicts the evolution of visual and neural representations at six key moments during the same trial.
The first-person perspective snapshots (Fig.~\ref{fig:grid1_3}, top row) correspond to distinct phases such as approaching target, transition, and decision-making between near-symmetric goals. The second row shows the pixel-wise target representation map that encodes the likelihood of target presence from the latest image input. Meanwhile, the neural activity rates represent integrated dynamics that evolve continuously as the system processes successive inputs. Thresholded neuronal activity (Fig.~\ref{fig:grid1_3}, last row) further highlight the subset of neurons contributing to motion generation and provided a denoised representation of neuronal activities.

\begin{figure}[htbp]
     \centering
    \begin{subfigure}[b]{0.48\textwidth}
    \centering
         \includegraphics[width=\textwidth]{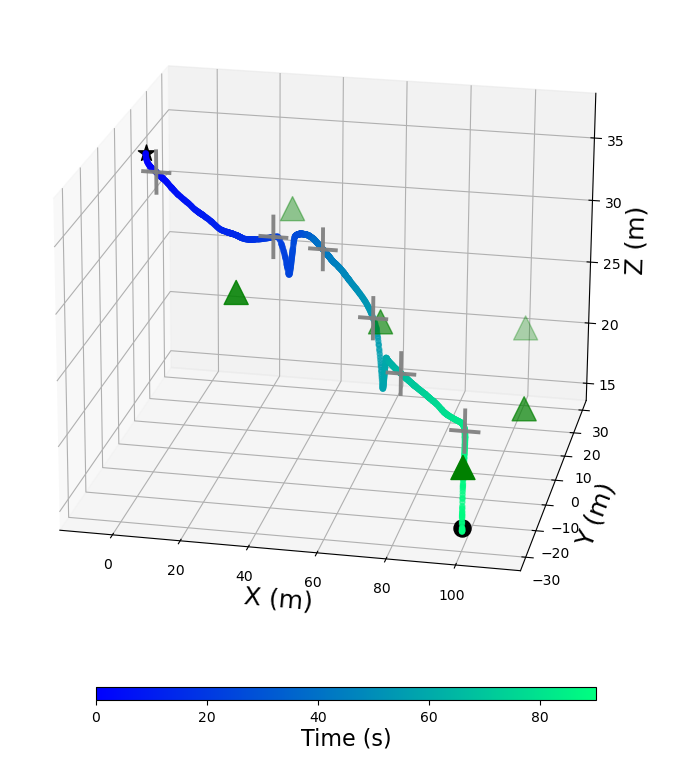}
         \caption{}
         \label{fig:grid1_1}
    \end{subfigure}
    \begin{subfigure}[b]{0.48\textwidth}
    \centering
         \includegraphics[width=\textwidth]{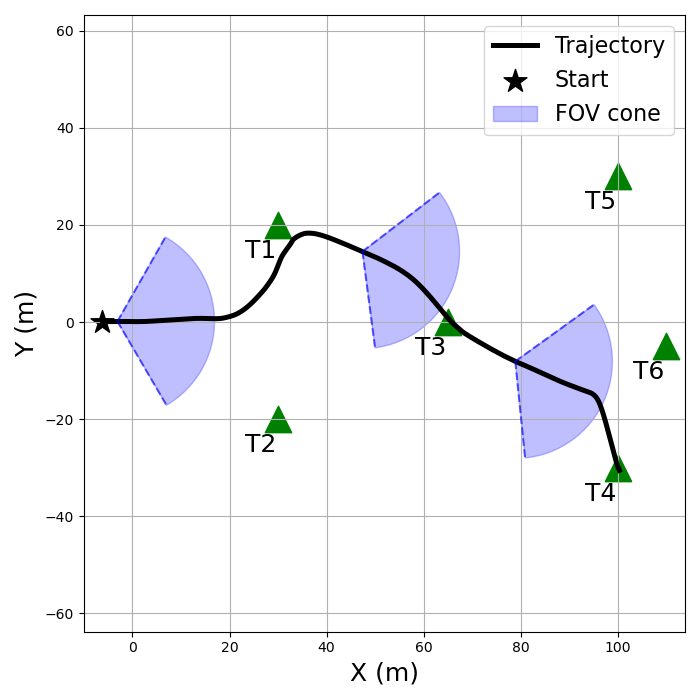}
         \caption{}
         \label{fig:grid1_2}
    \end{subfigure}
    \begin{subfigure}[b]{\textwidth}
    \centering
         \includegraphics[width=0.95\textwidth]{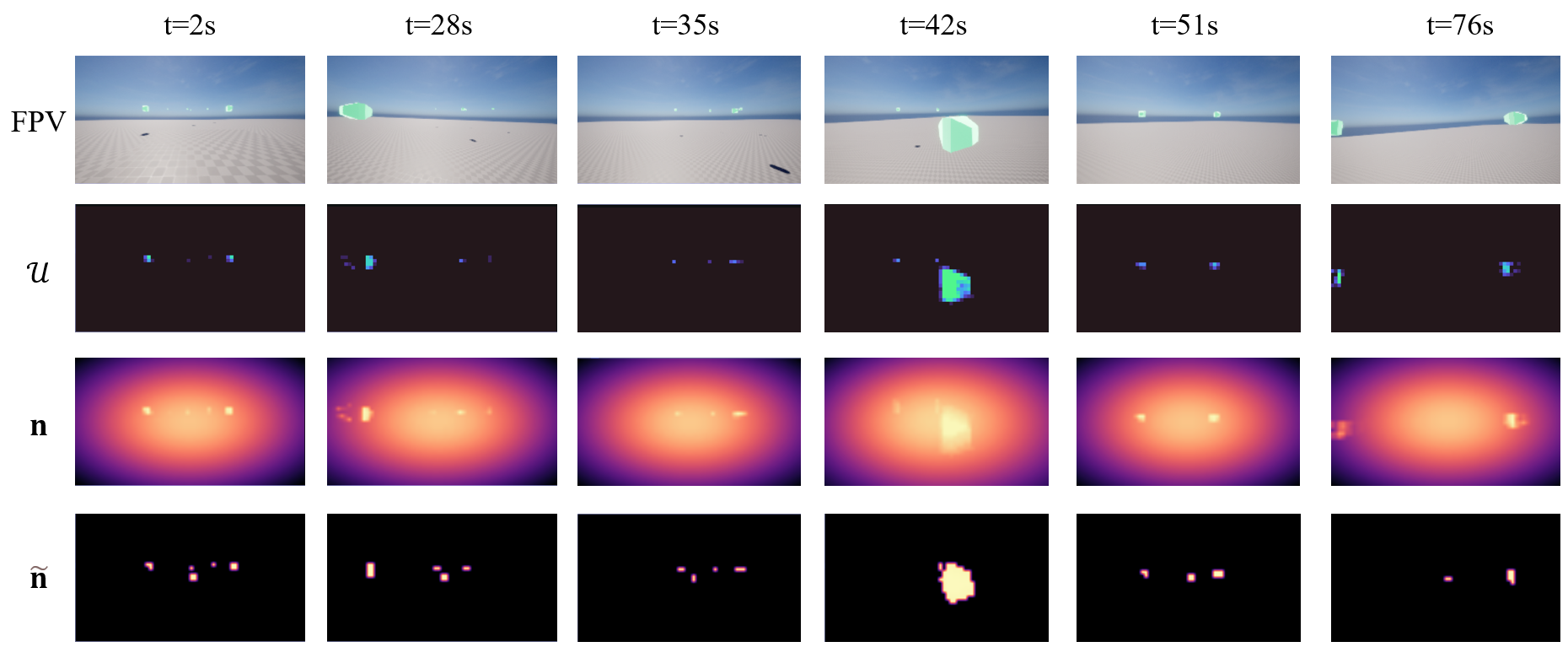}
         \caption{}
         \label{fig:grid1_3}
    \end{subfigure}
     \caption{
     \textbf{Vision-based navigation in a photorealistic environment.} 
     The quadrotor has a FOV of $120^{\circ}$ and  neural population  $k=36\times64$ ($a=\frac{5}{k}$ and $\alpha = 3$). \textbf{a} 3D trajectory. The star marks the start and the circle denotes the end of the path. Gray cross marks indicate six key positions whose snapshots are shown in (c).
     \textbf{b} Top-down projection of the trajectory in (a), with the FOV cones overlaid.
    \textbf{c} First-person perspective snapshots (top row) and their corresponding intermediate representations at six key moments: (1) start, (2) approaching target T1, (3) transitioning from T1 to T3, (4) approaching T3, (5) choosing between T4 and T6, and (6) approaching T4. The second row shows the pixel-wise target evidence map $\mathcal{U}$. The third row visualizes the neural activity rates representing the population activity distribution across all neurons. The bottom row shows the neural activity after thresholding, highlighting neurons that drive motion (see Video S2 for more visualizations).
    }
    
     \label{fig:Grid_sim1}
\end{figure}

We further evaluate the proposed framework in a multi-target tracking scenario, where a quadrotor is tasked with monitoring multiple flying quadrotors whose trajectories are unknown. The following assumptions are made: (1) each target exhibits smooth motion with bounded velocity and acceleration changes, and (2) the quadrotor aims to preserve a stable relative position with respect to the moving targets. Target tracking has been extensively studied in the UAV literature e.g., \cite{Thomas17ral, Yadav21cdc, chung2018survey, li2016multi, li2024learning, sun2024moving}. Most existing works focus on single-object tracking, where one or multiple quadrotors are controlled to keep a single target within their FOV \cite{Thomas17ral, Yadav21cdc}. Several studies have addressed multi-object tracking by assigning unique identifiers to each target and manually defining tracking assignments, e.g., \cite{chung2018survey, li2016multi, li2024learning, sun2024moving}. In contrast, our approach does not rely on explicit target IDs or manual assignment. Instead, it autonomously decides which targets to prioritize based solely on their visual observability.

To obtain visual representations, we employ an optical-flow camera that captures the motion field $\mathbf{u}$, demonstrating the adaptability of our framework to various sensing modalities. The camera has a $120^{\circ}$ FOV and a resolution of $260 \times 144$, which is encoded by a neural population of $65 \times 36$ units. The complete 3D trajectories of the controlled quadrotor and the targets, reconstructed from global state logs, are shown in Fig.~\ref{fig:grid2_1}  (see also Video S3). During this scenario, there are three distinct phases. First, all targets maintain formation. Next, one target departs from the formation while the other two remain together. Finally, the remaining two targets eventually separate. Across these phases, the proposed decision-making mechanism exhibits three characteristic behaviors. First, the quadrotor maintains formation with the targets when feasible. Next, when one of the targets splits the formation, the quadrotor preferentially selects and tracks the sub-group subtending a larger visual angle in the image plane, continuously adjusting its heading to keep the chosen targets centered within the camera view. Because the targets occupy a small fraction of the visual field and often move with relatively stable velocities, the resulting optical flow measurements can be noisy or even fail to yield clear target evidence. Nevertheless, our framework successfully maintains robust tracking and decision-making performance. This resilience arises from the inherent memory of the neural dynamics, which retain traces of past sensory inputs. For example, in Fig.~\ref{fig:grid2_3} at $t=17$s, the neural activity $\mathbf{n}$ continues to represent the right-hand target quadrotor even when the current visual input $\mathcal{U}$ temporarily loses track of it, demonstrating the system’s ability to maintain perceptual continuity under sensory uncertainty.

\begin{figure}[htbp]
     \centering
    \begin{subfigure}[b]{0.50\textwidth}
    \centering
         \includegraphics[width=\textwidth]{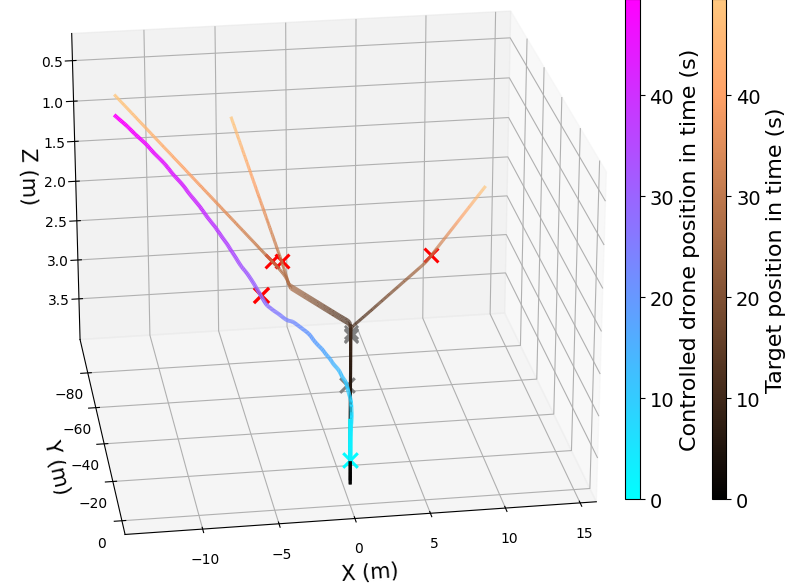}
         \caption{}
         \label{fig:grid2_1}
    \end{subfigure}
    \begin{subfigure}[b]{0.40\textwidth}
    \centering
         \includegraphics[width=\textwidth]{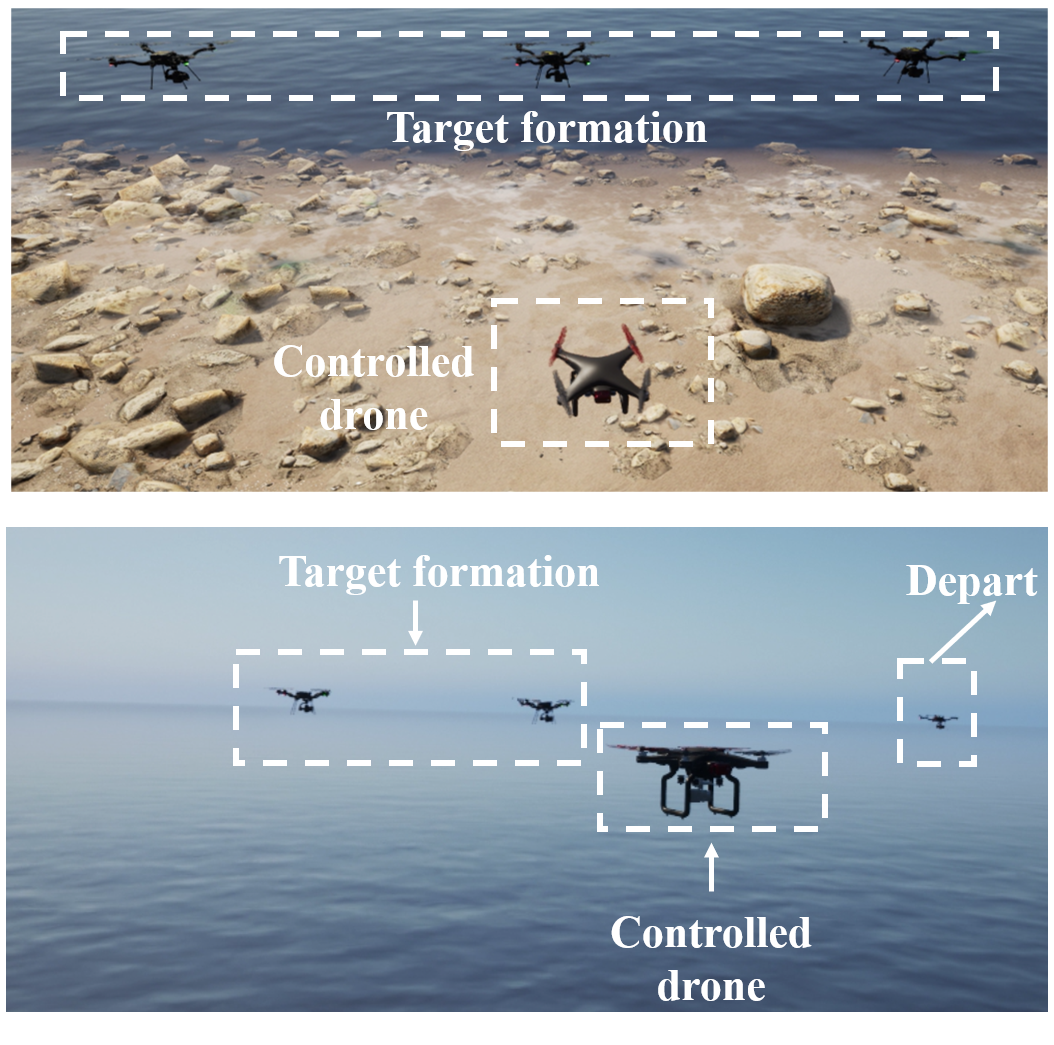}
         \caption{}
         \label{fig:grid2_2}
    \end{subfigure}
    \begin{subfigure}[b]{\textwidth}
    \centering
         \includegraphics[width=0.94\textwidth]{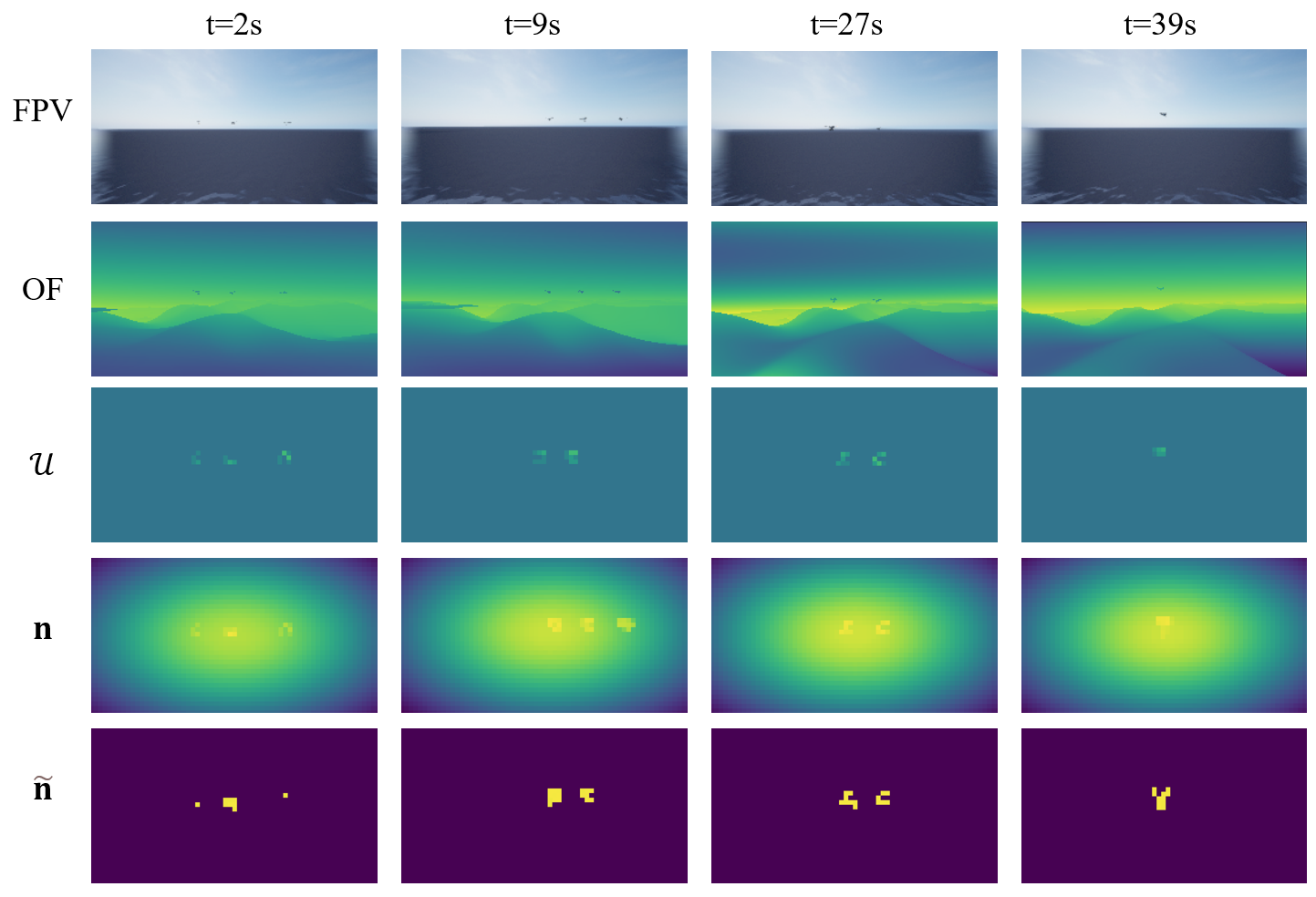}
         \caption{}
         \label{fig:grid2_3}
    \end{subfigure}
     \caption{
     \textbf{Vision-based multi-target tracking in a photorealistic environment.}
    The controlled quadrotor has a $120^{\circ}$ FOV and a population of $k = 36\times64$ neurons ($a=\frac{6}{k}$ and $\alpha = 6$).
    \textbf{(a)} 3D trajectories of the controlled quadrotor and three target quadrotors, with color gradients indicating temporal evolution. Gray and red cross-markers highlight the quadrotor positions at two moments, $t=9$s and $t=27$s, respectively. The corresponding first-person snapshots and intermediate representations at these times are shown in the second and third columns of panel (c).
    \textbf{(b)} “Fly-with-me” view at two moments. Top: At the start, the controlled quadrotor (white dashed box) observes all three targets flying in formation (yellow dashed box) within its FOV. Bottom: the rightmost target (black dashed box) departs from the formation while the controlled quadrotor continues tracking the group.
    \textbf{(c)} First-person perspective (FPV) snapshots and intermediate neural representations at four key moments: (1) start, (2) before one quadrotor leaves formation, (3) while following the larger formation of two targets, and (4) after switching to track a single target. The second row shows residual optical flow (OF) magnitude, and the third row shows the target evidence map $\mathcal{U}$. The fourth row visualizes the neural activity distribution while the bottom row highlights thresholded neural activations that directly drive motion decisions (see Video S3 for full visualizations).
    }
    
     \label{fig:Grid_sim2}
\end{figure}

\subsection{Autonomous decision-making in the physical world}
\label{Sec2_4}
Finally, we deployed the proposed neuromorphic framework in a real, custom-built quadrotor and conducted an statistical analysis of a navigation task involving two symmetrical targets (see Video S1).  The quadrotor, displayed in Fig.~\ref{fig:pipeline}a, ran all the computations onboard and was equipped with a camera that acquired images at 15 Hz. The parameters of the neural dynamics were adjusted to ensure that the spatial coordinates of the bifurcation point are located in the forward direction of the quadrotor in the robot frame. By doing that, we ensured statistical analysis of both the indecision region and the decision-making point. The quadrotor only relied on its locally available information during all the experiments, but a motion capture system was used to retrieve accurate measurements of the quadrotor position to ensure unbiased statistical analysis. As detailed in Fig. \ref{fig:real_experiments}, the quadrotor responded to visual stimuli in the same way observed in the simulated experiments, moving forward in the indecision region until the strength of target evidence induces a decision in the neural dynamics. Once the decision is taken, the neural dynamics yield a velocity command that orients the quadrotor to the chosen target and completes the navigation task. 

\begin{figure}
    \centering
    \includegraphics[width=0.95\linewidth]{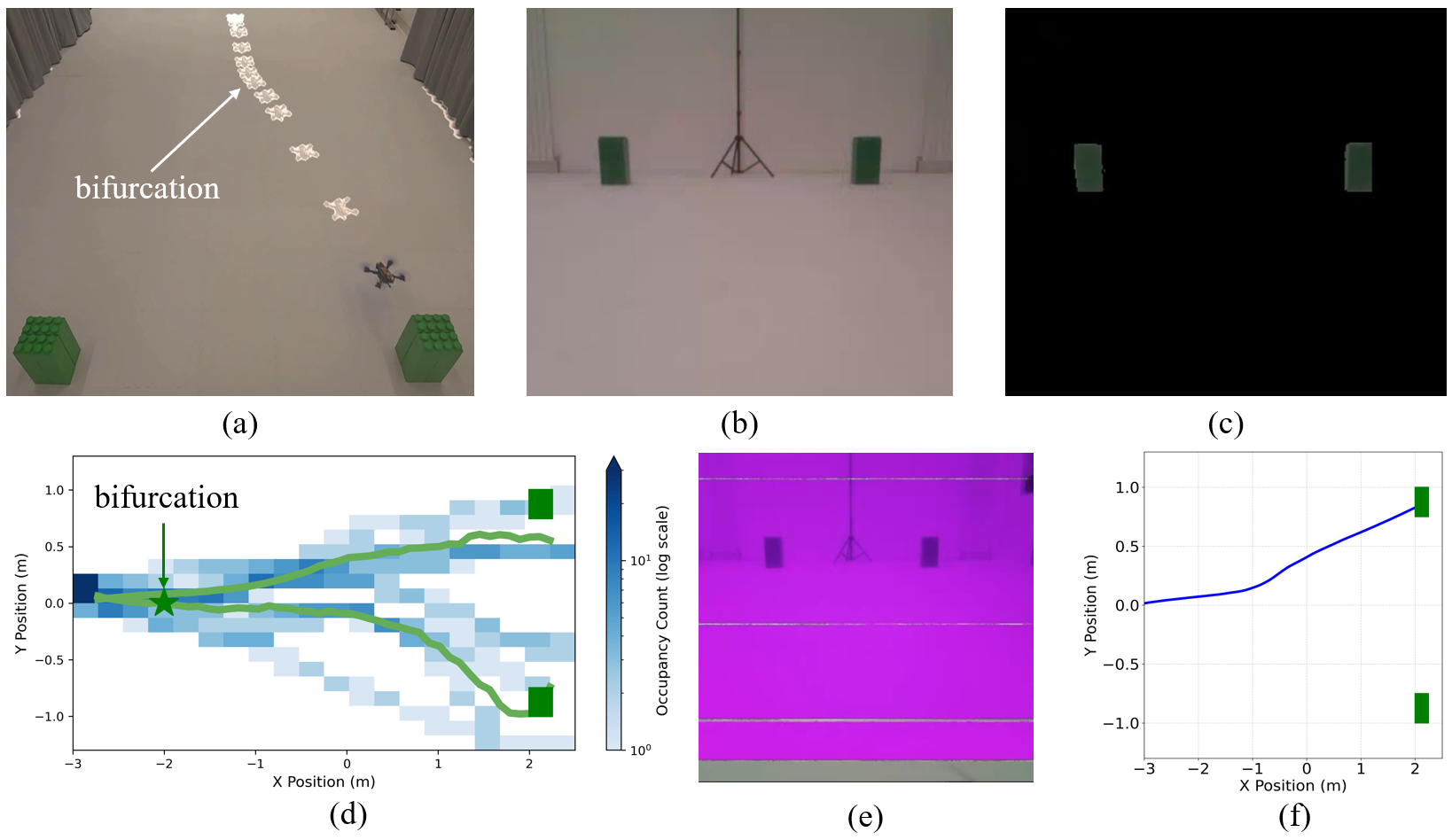}
    \caption{\textbf{Experimental results with a quadrotor.} \textbf{a} A custom quadrotor starts in a symmetrical configuration with respect to two targets, both of them within its FOV. The quadrotor moves forward steered by the indecision region of the neural dynamics until the bifurcation point is reached. Then, the ultra sensitivity region in the neural dynamics ensures that a decision is made, in this case the left target (with respect to the quadrotor body frame). Once a target is chosen, the quadrotor faces the target and progress towards it. \textbf{b-c} The quadrotor acquires RGB images at 15 Hz and uses a Hue-Saturation-Value color mask to compute the $\mathcal{U}$ matrix. 
    \textbf{d} A statistical analysis conducted shows that the quadrotor, presented with the same situation 10 times, bifurcates at (-2, 0). After that, the smoothed averages (window of 5 steps, in green) over the trajectories that turn left and right evolve according to the theoretical analysis of the neural dynamics, with the quadrotor facing the chosen target (green squares) and moving towards it. Statistical consistency is achieved despite noise in the output from perturbations in battery voltage and mismatches between commanded velocity and actual velocity. 
    \textbf{e-f} The proposed neuromorphic framework also exhibits high levels of disturbance rejection against noise in the input. During the experiment collection phase, the cable connecting the camera and the Raspberry Pi was damaged, leading to unpredictable perturbations affecting the entire image, as pictured in panel \textbf{e}. Nonetheless, the quadrotor was able to complete the task successfully, displaying the same decision-making behavior as the experiments with nominal input conditions, pictured in \textbf{f}. 
    }
    \label{fig:real_experiments}
\end{figure}

The experiments were conducted in reproducible conditions of initial position and orientation of the quadrotor and the targets. However, the collection of experiments included runs with different battery levels, yielding to unpredictable disturbances that affected the low-level module that translates velocity commands into actionable thrust commands. Also, the placement of the battery beneath the quadrotor frame was based on adhesive strips, meaning that the center of gravity of the platform was subject to constant disturbance that affected the stability of the low-level controller and its internal Extended Kalman Filter estimator. In spite of all these sources of noise, a statistical analysis of the trajectories of the quadrotor across experiments demonstrates that the proposed neuromorphic controller followed the same behavior exhibited by other animals subject to the same decision-making geometry \cite{forstmann2016sequential, ratcliff2016diffusion, roxin2008neurobiological}, with the bifurcation happening around $(-2.0, 0.0)$m and the indecision region being concentrated in the region between $(-2.8, 0.0)$m and $(-2.0, 0.0)$m. 

Another relevant factor to consider in these experiments is the limited FOV of the camera. With a FOV of $60^{\circ}$, output perturbations led to eventual disappearances of the target from the image, inducing a critical input perturbation that our neuromorphic framework can handle naturally. The neural dynamics include a memory term that rejects high-frequency disturbances in the input. As a consequence, the quadrotor, with the appropriate tuning of the neural dynamics, can continue its navigation for sufficiently long enough time to recover from the eventual loss of visual stimuli. Furthermore, during the collection of experiments, the image stream was corrupted, leading to RGB inputs such as the one displayed in Fig.~\ref{fig:real_experiments}e. Those experiments were not included in the statistical analysis but, for completeness, we studied the trajectories achieved by the quadrotor and, interestingly, it performed similarly than with the appropriate hardware conditions. The quadrotor still exhibited an indecision region around the same spatial bifurcation point, beyond which it committed to one of the two targets, and oriented and moved toward it. In this case, the corruption in the color space of the image was symmetrical in the same spatial orientation of the symmetry of the targets, so it did not induce any asymmetry that could help break indecision.

\section{Discussion}\label{sec12}
In this work, we introduced a parsimonious neuromorphic control framework that effectively bridges the architectural divide between proximal and distal navigation. By translating high-dimensional visual inputs directly into egocentric motion commands, our system retains the computational efficiency and reactivity of proximal, sensor-based approaches. At the same time, unlike traditional reactive methods that struggle with ambiguity, our bio-inspired system emulate the decisive capabilities of distal planners, successfully overcoming symmetry-induced indecision without relying on internal world models or explicit maps. This synthesis offers a reliable path to autonomy that eliminates the need for heavy planning stages or extensive neural network training, achieving decisive behaviors purely through sensorimotor dynamics.

The power of our framework lies in its ability to emulate the dynamic cognitive computations found in animals. The controller's decision-making process emerges from the collective dynamics of neural populations, which are governed by principles that are analogous to ring attractors and Nonlinear Opinion Dynamics. As we demonstrate through analysis, simulations, and physical experiments, when a robot encounters symmetrical targets, the system naturally enters an indecision region. As the robot moves forward, it reaches a spatial ``bifurcation point'' where the neural dynamics become ultrasensitive to differences between targets, biasing the decision towards options that subtend larger areas of the robot's visual field. At this point, asymmetries are amplified, forcing the system to commit decisively to a single target. This mechanism differs fundamentally from the explicit cost-function optimization of MPC or the learned biases of RL, enabling our method to robustly and repeatedly achieve unbiased symmetry breaking. Meanwhile, the computational load of the framework is dominated by two components: visual processing (target evidence extraction) and neural population. The former reflects a trade-off between disturbance rejection of real-world visual conditions and the fidelity of the extracted cues, while the latter scales primarily with the camera’s FOV, as wider FOVs require more neurons to represent the visual space. Meanwhile, we show that a neural population of size $64 \times 32$---sufficient to represent a FOV smaller than $180^{\circ}$---achieves a runtime that remains on the same order of magnitude as a reinforcement learning policy network with an 11-dimensional input, two 256-unit hidden layers, and a 2-dimensional output.

A compelling feature observed across our experiments is the system's ability to maintain consistent performance despite sensory imperfections and hardware instabilities. In multi-target tracking scenarios, where weak optical flow cues and small visual angles often lead to noisy detection, the system effectively maintained tracking and decision-making capabilities without explicit filtering modules. This behavior suggests that the temporal integration inherent to the neural dynamics provides a natural resilience against signal dropouts and high-frequency noise, a property that can be further explored in swarm coordination and flocking behaviors \cite{mezey2025purely, vasarhelyi2018optimized}. Similarly, our physical quadrotor experiments yielded consistent bifurcation behaviors even when subjected to significant disturbances, including fluctuating battery levels, shifting centers of gravity, and severe image stream corruption. While these results provide promising evidence of the controller's intrinsic tolerance to real-world noise, a systematic quantification of these stability margins, subjecting the system to controlled, high-magnitude adversarial perturbations, is required to characterize robustness. In this paper, we intentionally use simple visual cues to distinguish targets, drawing inspiration from how animals detect predators or food sources \cite{mauss2020optic, hemmi2005predator, teichroeb2014sensory}, but richer perception models can be integrated to handle more complex navigation tasks.

Although our results are promising, this framework raises several questions for future research. We only demonstrated navigation with up to six targets, so the scalability of the neural dynamics in scenes with dozens of potential targets or distractions remains to be quantified. In theory, our framework favorably scales with the total number of targets in the scene because the neural population only responds to stimuli within the camera's FOV. However, in practice, targets must be distinguishable in order to make an actual decision. Our experiments, which ran at 15 Hz, hinted robust navigation. Exploring the performance of this controller in high-speed, agile flight, perhaps by pairing it with event-based neuromorphic cameras for stronger background-foreground separation would be a compelling next step, as well as conducting a systematic assessment of robustness as a property emerging from neural dynamics using rigorous mathematical analysis of the framework and simulation studies. 

Ultimately, the proposed neuromorphic framework provides physical validation of computational neuroscience theories, showing that cognitive principles derived from animal studies can be powerful and practical engineering solutions. By bridging the gap from perception to decision-making, this work embraces a third paradigm for robotic autonomy: one that is neither purely proximal nor distal, neither explicitly programmed nor fully learned, but instead emerges from interpretable, dynamic, and resilient bio-inspired computation.

\section{Methods}\label{sec11}

In this section, we outline the key components of the proposed neuromorphic vision-to-control pipeline and present supporting analysis illustrating its decision-making mechanism. We begin by introducing the model formulation in Section~\ref{ref:model_form} and its discrete-time implementation in Section~\ref{sec:discretetime}. Section~\ref{sec:coarse_model} derives a reduced formulation of the model for $k$ clustered inputs corresponding to $k$ distinct targets, followed by a discussion of the timescale separation between neural dynamics and motion and bifurcation analysis in Section~\ref{sec:timescale_bif}. Finally, we describe the simulation environment and experimental setup in Sections~\ref{sec:sim_env} and~\ref{sec:experiment}.
\subsection{Model formulation}\label{ref:model_form}

We consider a robot moving in three dimensions, equipped with an onboard forward-facing RGB camera. We assume that the class of visual features that characterize the targets (e.g., color, shape, optical flow) is known.
For an image of resolution $(H,W)$, each pixel is assigned a value $u_i \geq 0$ representing the quantity of interest. Larger values of $u_i$ indicate higher confidence that pixel $i$ contains information about a target. We define the input vector $\mathbf{u} = [u_1,\dots,u_{H\times W}]^{\top}$ that contains goal information across all $H\times W$ pixels, as well as the diagonal input matrix \mbox{$\mathcal{\widehat{U}} = \operatorname{diag}(\mathbf{u}) \in \mathbb{R}^{(H \times W) \times (H \times W)}$}. The $\mathcal{\widehat{U}}$ matrix can be constructed using various image-processing techniques, such as segmentation, object detection, or neural implicit representations. If the number of pixels in the image exceeds the number of available neurons $k$,  the $\mathcal{\widehat{U}}$ matrix should be downsampled to a lower-dimensional matrix $\mathcal{U}\in \mathbb{R}^{k \times k}$, which is then passed into the neural dynamics defined in \eqref{eq:ndt_vector}. For example, in our experiments, we use an image resolutions of 144 by 256 pixels, which are downsampled via bilinear interpolation to a neural population of 36 by 64.

The image space is represented by $k$ populations of neurons whose states evolve in continuous time. Let $\mathbf{n}=[n_0, n_1, \hdots , n_k]^{\top}$ denote the vector of neural activity firing rates. Each neuron $i$ is associated with a directional unit vector $p_i \in \mathbb{R}^3$, collected in the $3\times k$ matrix $\mathcal{P} = [ p_1, \dots, p_k]$. The neural firing rate $n_i\in[0,1]$ represents the strength of preference of neural population $i$ for movement in direction $p_i$ in response to evidence from the inputs.  The neural dynamics, driven by the visual input encoded in the matrix $\mathcal{U}$,
evolve according to
\begin{equation}
    \frac{d\mathbf{n}}{dt} = \Pi_{\Delta}(f(\mathbf{n})), \quad f(\mathbf{n})= - \mathbf{n} +S\big(W(\mathbf{u})\mathbf{n}\big), \quad W(\mathbf{u}) = \mathcal{U}^{\top}\mathcal{P}^{\top} \mathcal{P}\mathcal{U},\label{eq:ndt_vector}
\end{equation} 
where $W(\mathbf{u})$ is an input-driven neural coupling matrix constructed as a Gram matrix of the input-weighted direction vectors in $\mathcal{P}$. The structure of $f(\mathbf{n})$ takes inspiration from mean-field neural models of animal spatial decision-making \cite{sridhar2021geometry} and recently proposed input-driven Hopfield and firing rate networks \cite{betteti2025input}. The map $\Pi_{\Delta}:\mathbb{R}_k^{+} \to \Delta_{k-1}$ in \eqref{eq:ndt_vector} constrains the evolution of neural activity to the simplex $\Delta_{k-1} = \{ \mathbf{n} \ \text{s.t.} \ n_i \in [0,1] \ \text{for all } i \ \text{and} \ \sum_{i = 1}^k n_i = 1 \}$, which we prove rigorously in Section 2 of the Supplementary Materials. Explicitly, this map is defined as 
\begin{equation}
    \Pi_{\Delta}(f(\mathbf{n})) = f(\mathbf{n}) - \left(\mathbf{1}^T f(\mathbf{n})\right) \mathbf{n}. \label{eq:simplex_projection}
\end{equation}
The nonlinear activation function $S(\cdot)$ in \eqref{eq:ndt_vector} is the sigmoid  
\begin{equation}
    S(x) = \frac{a}{1 + e^{- \alpha x}},
    \label{eq:sigmoid}
\end{equation}
where $a>0$ sets the upper bound of the saturation and $\alpha > 0$ is a coupling gain that tunes its slope.

The velocity of the robot in its body frame is selected by the neural activity \eqref{eq:ndt_vector} as described by
\begin{equation}
     \mathbf{v} = v_0 \mathcal{P}\tilde{\mathbf{n}}, \label{eq:v}
\end{equation}
where $v_0$ is a positive speed scaling constant and  $\tilde{\mathbf{n}}$ is the thresholded neural activity vector 
\begin{equation}
\tilde{\mathbf{n}} = [\tilde{n}_0, \tilde{n}_1, \dots, \tilde{n}_k]^{\top},  \quad 
\tilde{n}_i =
\begin{cases}
n_i, & \text{if } n_i > g_{\text{threshold}}, \\
0,   & \text{otherwise}.
\end{cases} \label{eq:thresholded_n}
\end{equation}

\subsection{Discrete-time neural dynamics model}
\label{sec:discretetime}
In the robot control logic, the simplex-constrained neural dynamic model \eqref{eq:ndt_vector},\eqref{eq:simplex_projection} is implemented in two discrete steps performed at every iteration of the algorithm: a neural activity update step and a normalization step. Let the discrete-time neural state be $\mathbf{n}_\ell = \mathbf{n}(\ell \Delta t) \in \Delta_{k-1}$ where $\ell \in \mathbb{N}$ is an index and $\Delta t > 0$ is a discretization step size. The two-step neural update reads
\begin{subequations} \label{eq:neural_dynamics_DT}
    \begin{align}
        \widehat{\mathbf{n}}_{\ell+1} & = \mathbf{n}_{\ell} + \Delta t \ f(\mathbf{n}_{\ell}) = (1 - \Delta t) \mathbf{n}_\ell + \Delta t \ S\big( W(\mathbf{u}) \mathbf{n}_{\ell} \big), \label{eq:discrete_state_update}\\
        \mathbf{n}_{\ell + 1} &= \frac{\widehat{\mathbf{n}}_{\ell+1}}{\mathbf{1}^{\top} \widehat{\mathbf{n}}_{\ell+1}} \label{eq:discrete_normalization},
    \end{align}
\end{subequations}
where $W(\mathbf{u})$, $S$ are defined as in \eqref{eq:ndt_vector},\eqref{eq:sigmoid}. To establish equivalence between the discrete-time model \eqref{eq:neural_dynamics_DT} and the continuous-time model \eqref{eq:ndt_vector},\eqref{eq:simplex_projection}, observe that it can equivalently be expressed as 
\begin{equation} \label{eq:neural_DT_onestep}
    \mathbf{n}_{\ell + 1} = \frac{\mathbf{n}_{\ell} + \Delta t \ f(\mathbf{n_\ell})}{\mathbf{1}^{\top} \big(\mathbf{n}_{\ell} + \Delta t \ f(\mathbf{n_\ell})\big) } = \frac{\mathbf{n}_{\ell} + \Delta t \ f(\mathbf{n_\ell})}{1 + \Delta t \ \mathbf{1}^{\top}  f(\mathbf{n_\ell}) } 
\end{equation}
where the second equivalence above uses the fact that $\mathbf{n}_{\ell} \in \Delta_{k-1}$. Taking a Taylor expansion of \eqref{eq:neural_DT_onestep} around $\Delta t = 0$ we get 
\[
    \mathbf{n}_{\ell+1} = \mathbf{n}_{\ell} + \Delta t \left( f(\mathbf{n}_{\ell}) - \big(\mathbf{1}^{\top} f(\mathbf{n}_{\ell}) \big)\mathbf{n}_{\ell} \right) + \mathcal{O}\big(\Delta t^2\big)
\]
or equivalently, rearranging the terms, 
\begin{equation} \label{eq:neural_DT_differential}
    \frac{\mathbf{n}_{\ell+1} - \mathbf{n}_{\ell}}{\Delta t} = f(\mathbf{n}_{\ell}) - \big(\mathbf{1}^{\top} f(\mathbf{n}_{\ell}) \big)\mathbf{n}_{\ell} + \mathcal{O}(\Delta t).
\end{equation}
In the limit $\Delta t \to 0$, \eqref{eq:neural_DT_differential} recovers the neural ODE \eqref{eq:ndt_vector},\eqref{eq:simplex_projection}. Note that without the normalization step \eqref{eq:discrete_normalization}, the state update \eqref{eq:discrete_state_update} is exactly an Euler discretized, input-driven generalization of the mean field decision-making model proposed to describe spatial movement of animals in \cite{sridhar2021geometry}. In our numerical experiments, we found the normalization step \eqref{eq:discrete_normalization} to be particularly important for generating well-behaved decision-making with high-dimensional state spaces. Without normalization to the simplex, calibration of the saturation bound and slope parameters $a,\alpha$ of the activation function \eqref{eq:sigmoid} for a variety of input profiles encountered during navigation is challenging. 

\subsection{Coarse-grained model for clustered inputs}
\label{sec:coarse_model}
In this section, we show that for $r < k$ small input clusters with binary inputs, the neural decision model \eqref{eq:ndt_vector},\eqref{eq:simplex_projection} approximately reduces to an $r$-dimensional effective ODE. This limit corresponds to absolute knowledge of target positions, such as in the case of an available map and accurate localization. The effective coarse-grained model we will derive is used in the benchmarking experiments in Section \ref{Sec2_1}.

Let $C_{s} \subseteq \{1, \dots, k\}$ be the set of pixel indices associated with cluster $s \in \{1, \dots, r\}$ of isolated visual inputs. Associated with each cluster $s$ we can define the total neural activation variable and average target position vector as 
\[
    \Bar{n}_s = \sum_{j \in C_s} n_j, \quad \bar{\mathbf{p}}_s = \frac{1}{|C_s|} \sum_{j \in C_s} \mathbf{p}_j,
\]
with the vector of neural activations for input clusters $1$ through $r$ being \mbox{$\bar{\mathbf{n}} = [\bar{n}_1, \dots, \bar{n}_r] \in \Delta_{r-1}$}. We assume that each cluster of inputs subtends a small angle in the visual field and therefore $\|\bar{\mathbf{p}}_s\| \approx 1$,  meaning that for any $i \in C_s$, $j \in C_{\ell}$, 
\[
    \mathbf{p}_i^{\top} \mathbf{p}_j \approx \bar{\mathbf{p}}_s^{\top} \bar{\mathbf{p}}_{\ell}.
\]
Furthermore, we assume that each pixel input $u_i \in \{0,1\}$ where $u_i = 0$ indicates direction $\mathbf{p}_i$ is not associated with a target, while $u_i = 1$ means it is associated with a target. Under these assumptions, observe that 
\begin{equation}
    W_{ij}(\mathbf{u}) = u_i u_j \mathbf{p}_i^{\top} \mathbf{p}_j \approx \begin{cases}\bar{\mathbf{p}}_s^{\top}\bar{\mathbf{p}}_{\ell} \quad \text{if} \ i \in C_s \ \text{and} \ j \in C_{\ell} \ \text{for some } s,\ell \in \{1,\dots,r\}, \\
        0 \quad \quad \ \ \text{otherwise}.
    \end{cases}
\end{equation}
Then for any pixel associated with an input cluster, $i \in C_s$, the input to the saturation function $S$ in \eqref{eq:ndt_vector} is
\begin{equation}
    (W(\mathbf{u}) \mathbf{n})_i = \sum_{\ell = 1}^r \sum_{j \in C_{\ell}} \mathbf{p}_i^{\top} \mathbf{p}_j n_j \approx \sum_{\ell = 1}^r \bar{\mathbf{p}}_s^{\top}\bar{\mathbf{p}}_{\ell} \sum_{j \in C_{\ell}} n_j = \sum_{\ell = 1}^r \big(\bar{\mathbf{p}}_s^{\top}\bar{\mathbf{p}}_{\ell}\big)\bar{n}_\ell. \label{eq:sat_input_cluster}
\end{equation}
For any pixel $i$ not associated with an input cluster, i.e. for which $u_i = 0$, $(W(\mathbf{u}) \mathbf{n})_i = 0$ and the neural dynamics simplify to 
\[
    \frac{dn_i}{dt} = - n_i + \frac{a}{2} - (\mathbf{1}^{\top} f(\mathbf{n})) n_i = \frac{a}{2}- \big(1 + (\mathbf{1}^{\top} f(\mathbf{n}))\big) n_i = \frac{a}{2} - \big( \mathbf{1}^{\top} S(W(\mathbf{u}) \mathbf{n})\big) n_i
\]
where the last step used the fact that $\mathbf{1}^{\top}\mathbf{n} = 1$ due to the simplex constraint.
Let $I_0$ be the set of all pixel indices for which $u_i = 0$. Then at an equilibrium $\mathbf{n} = \mathbf{n}^*$, every zero-input neuron $i \in I_0$ has identical steady-state activity level 
$n_i^* = n_0^* := \frac{a \beta^*}{2}$
where $\beta^* := \mathbf{1}^T S(W(\mathbf{u} )\mathbf{n}^*)$. Since $\beta^* > |I_0| \frac{a}{2}$ and $\beta^* < k a$, we can bound the steady-state activity as $\frac{1}{2k} < n_0^* < \frac{1}{|I_0|}$. Then we see that as the number of pixels $k$ grows large and the number of zero-input pixels grows large, $n_0^* \to 0$.

To derive the coarse-grained neural model we will consider the evolution of the total neural activity associated with input cluster $s$ under the assumption $k \gg 1$, $|I_0| \gg 1$ so that $n_0^* \approx 0$. The evolution of $\bar{n}_s$ is governed by
\begin{equation}
    \frac{d \bar{n}_s}{dt} = \sum_{j \in C_s} \frac{d n_j}{dt} = \sum_{j \in C_s} f_j(\mathbf{n}) - (\mathbf{1}^{\top} f(\mathbf{n})) \sum_{j \in C_s} n_j. \label{eq:cluster_dyn}
\end{equation}
Using \eqref{eq:sat_input_cluster}, observe that 
\[
    \sum_{j \in C_s} f_j(\mathbf{n}) = - \sum_{j \in C_s} n_j  + \sum_{j \in C_s} S\Big( \big( W(\mathbf{u}) \mathbf{n}\big)_{j} \Big) \approx - \bar{n}_s + |C_s| S\left(\big( \bar{\mathcal{P}}^{\top}\bar{\mathcal{P}} \bar{\mathbf{n}}\big)_{s}\right) 
\]
where $\bar{\mathcal{P}} = [\bar{\mathbf{p}}_1, \dots, \bar{\mathbf{p}}_r]$. With this approximation, the clustered neural activity dynamics \eqref{eq:cluster_dyn} can be expressed as 
\begin{equation}
    \frac{d\bar{\mathbf{n}}}{dt} = \Pi_{\Delta} \big(\bar{f}(\bar{\mathbf{n}})\big), \quad \bar{f}(\bar{\mathbf{n}}) = - \bar{\mathbf{n}} + C S\left( \bar{\mathcal{P}}^{\top}\bar{\mathcal{P}} \bar{\mathbf{n}}\right) \label{eq:coarse_grained_general}
\end{equation}
where $C = \operatorname{diag}(|C_1|, \dots, |C_r|)$ and $\Pi_{\Delta}$ is defined as in \eqref{eq:simplex_projection}. If we further assume that the input clusters have equal size, $|C_i| = c $ for all $i \in \{1, \dots, r\}$, i.e. the options are perceived to have equal quality, then the coarse-grained model can be expressed as 
\begin{equation}
    \frac{d\bar{\mathbf{n}}}{dt} = \Pi_{\Delta} \big(\bar{f}(\bar{\mathbf{n}})\big), \quad \bar{f}(\bar{\mathbf{n}}) = - \bar{\mathbf{n}} + S\left( \bar{\mathcal{P}}^{\top}\bar{\mathcal{P}} \bar{\mathbf{n}}\right) \label{eq:coarse_grained_equal_input}
\end{equation}
where the cluster size parameter $c$ is absorbed into the upper bound parameter $a$ in the saturating function definition \eqref{eq:sigmoid}. The benchmarking experiments in Section \ref{Sec2_1} were performed with the coarse-grained equal-input model \eqref{eq:coarse_grained_equal_input}.

\subsection{Timescale separation and bifurcation analysis}
\label{sec:timescale_bif}

In the full vision-guided navigation problem, the robot's velocity is directly informed by the neural state as $\mathbf{v} = v_0 \mathcal{P}\tilde{\mathbf{n}}$, which in turn depends on visual inputs that change as the robot moves through space. This creates a coupled dynamical system in which neural activity and spatial position of the agent coevolve. However, in typical operating regimes, the neural dynamics integrates visual evidence on a much faster timescale than that of the spatial movement dynamics of the robot. Under this timescale separation, equilibria of the neural dynamics \eqref{eq:ndt_vector} induce a slow manifold that is tracked by the neural state as the robot moves \cite{kuehn2015multiple}. Therefore understanding the embedded decision-making in the moving agent requires understanding the structure of equilibria in the neural decision model. In this section, we will illustrate with a simple example that transitions between qualitatively different movement behaviors, i.e. \textit{decisions}, occur when the visual input drives the neural dynamics through a \textit{bifurcation point}. We will then use this insight to define a theoretically-justified heuristic that allows us to identify such points of decision-making in simulations and hardware experiments.  

\subsubsection{Bifurcation analysis of two-option coarse-grained ND model}

To illustrate the core decision-making mechanism of the neural dynamics, we study analytically the simplest scenario of making a decision between two alternatives. To do this, we focus our attention on the coarse-grained model \eqref{eq:coarse_grained_general} derived in Section~\ref{sec:coarse_model}. Specialized to the two-option case, and under the simplex constraint $\bar{n}_2 = 1 - \bar{n}_1$, this model reduces to a one-dimensional ordinary differential equation
\begin{equation}
    \frac{d\bar{n}_1}{dt} = |C_1|(1-\bar{n}_1) S\big((1- c_{12})\bar{n}_1  + c_{12}\big) - |C_2|\bar{n}_1 S\big((c_{12}-1)  \bar{n}_1 + 1\big)
    \label{eq:pitchfork_unfold_n1}
\end{equation}
where $c_{12} := \cos(\theta_{12}) = \mathbf{p}_1^T\mathbf{p}_2$ and $\theta_{12}$ is angular distance between the two targets as perceived by the moving agent at a particular point in space. In Section 3 of the Supplementary Materials document we carry out a detailed mathematical analysis of \eqref{eq:pitchfork_unfold_n1}. In this section we summarize the key takeaways from this analysis and interpret them in the context of autonomous spatial decision-making. 

Since spatial movement in the direction of the two targets will increase the visual angle $\theta_{12}$ and therefore decrease $c_{12}$ between its extreme values of $1$ and $-1$, it is natural to consider the parameter $\mu = 1 - c_{12} \in [0,2]$ and ask how the number and stability of equilibrium solutions of \eqref{eq:pitchfork_unfold_n1} change with variation in $\mu$ as it increases from 0 to 2. In other words, we are interested in \textit{bifurcations of equilibria} in the model \eqref{eq:pitchfork_unfold_n1} parametrized by $\mu$, which can be studied using tools from dynamical systems theory \cite{guckenheimer2013nonlinear,Golubitsky1985}. When $\mu$ is close to 0, the two targets are very close in the agent's visual field  with $\theta_{12} \approx 0^{\circ}$. When $\mu$ is close to 2, $\theta_{12}$ is close to $180^{\circ}$, with the targets on opposite sides of the agent. At some intermediate critical angle $\theta^*$, and therefore critical value $\mu^*$ we expect the model to exhibit a bifurcation in which new equilibria that correspond to a decision emerge as the parameter $\mu$ is varied. Agents cross this neural bifurcation point as they move towards targets and increase their visual angle.

\begin{figure}
    \centering
    \includegraphics[width=0.99\linewidth]{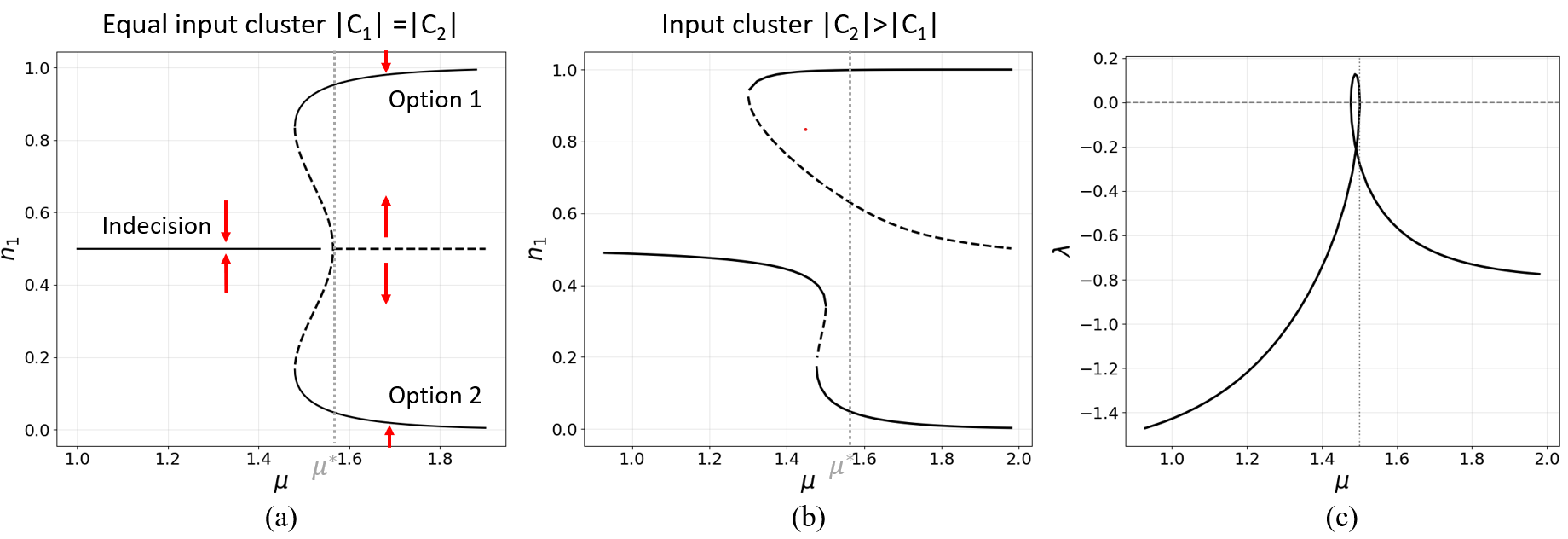}
    \caption{\textbf{Bifurcation structure of the two-option coarse-grained ND model.}
    \textbf{(a)}  Pitchfork bifurcation diagram with equal input cluster.  Bifurcation branches  of stable (unstable) equilibria are depicted as solid (dashed) lines. The gray vertical line marks the symmetric bifurcation point $\mu^{*} = 1.56$, where the system transitions from an indecision state to two symmetric decision states (Option 1 or Option 2).
     \textbf{(b)} Bifurcation diagram under unequal input clusters. The same gray vertical line from (a) is shown to highlight that asymmetry leads to an ultrasensitive response near the symmetric bifurcation point $\mu^{*}$, collapsing the symmetric pitchfork into an imperfect, biased decision structure.
     \textbf{(c)} Jacobian eigenvalue $\lambda$ 
     along the lower branch in (b). The bifurcation diagrams in (a) and (b) are computed with Julia BifurcationKit.jl \cite{veltz:hal-02902346}. Model parameters: $a = 0.8, \alpha = 6$; for (a), $|C_1| = |C_2| = 1$; for (b), $ |C_1|= 1, |C_2| = 1.3$.}
    \label{fig:bifurcation_analysis}
\end{figure}

When the two input clusters have equal size, i.e. $|C_1| = |C_2|$, the indecision state $\bar{n}_1 = \bar{n}_2 = \frac{1}{2}$ is always an equilibrium of \eqref{eq:pitchfork_unfold_n1}. At this equilibrium, the agent's instantaneous velocity will point in the average direction of the two targets. Linear stability analysis reveals that for sufficiently large coupling gain ($\alpha > 2$), there exists a critical value $\mu^*$ defined implicitly by 
\begin{equation}
    \frac{\alpha}{4} \mu^* \tanh\left( \frac{\alpha}{4}(2 - \mu^*)\right) -  \frac{1}{4} \alpha \mu^* + 1 = 0 \label{eq:bif_condition}
\end{equation}
at which the indecision equilibrium  loses stability in a pitchfork bifurcation at which two additional solution branches emerge in a neighborhood of the bifurcation point - see Fig. \ref{fig:bifurcation_analysis}a. For $\mu > \mu^*$ two new equilibria exist: one with $\bar{n}_1 > 0 $ which corresponds to a decision to bias the direction of movement towards target 1, and one with $\bar{n}_1 < 0$ which corresponds to a decision to bias the direction of movement towards target 2. As shown in \eqref{eq:bif_condition}, the value of the parameter $\mu^*$ and therefore of the critical angle $\theta^*$ at the bifurcation point depends on the choice of the slope parameter $\alpha$ of the neural saturation. We find that the smallest possible critical value for the visual angle is $\theta^* \approx 124^{\circ}$ which occurs at the value $\alpha \approx 5.87$, with smaller or larger choices of $\alpha$ leading to a larger critical angle. However, we note that this lower bound on the bifurcation angle is specific to the two-option coarse-grained model, and  for the multidimensional model \eqref{eq:ndt_vector} with the more realistic assumption that clusters of stimuli increase in size as an agent gets closer to them in physical space,  a different set of calculations must be performed in order to find this estimate. In the simulation studies shown in previous sections, bifurcations were observed for the full-dimensional model at visual angles smaller than $124^{\circ}$.

When the two input clusters have unequal size, i.e. when $|C_1| \neq |C_2|$, the asymmetry $\bar{C} = \frac{|C_1| - |C_2|}{|C_1| + |C_2|}$ acts as an \textit{unfolding parameter} that biases the bifurcation diagram towards one of the two decision branches, see Fig.\ref{fig:bifurcation_analysis}b. Near the bifurcation point $\mu = \mu^*$, the decision is ultrasensitive to the cluster asymmetry and even small differences between $|C_1|$ and $|C_2|$ are amplified into a decisive commitment towards the larger cluster. This ultrasensitivity manifests itself as a characteristic signature in the dominant Jacobian eigenvalue along the continuous branch of solutions in the bifurcation diagram, as pictured in Fig.\ref{fig:bifurcation_analysis}c. The eigenvalue spikes towards zero as the system passes through a neighborhood of $\mu^*$ before returning to a more negative value. This spike reflects a ``ghost'' of the symmetric pitchfork bifurcation point that persists even when the symmetry is broken. Since any realistic scenario will have some asymmetry in visual inputs, either from true asymmetry in target sizes or due to inevitable sensor and discretization noise, this eigenvalue signature provides a practical criterion for detecting decision points in simulations and experiments. An observed spike in the dominant eigenvalue at a nearby neural equilibrium signals that the agent is passing through the critical region where small input differences are being amplified into a decisive choice to sharply change its direction of movement towards a subset of visible targets.

Through the lens of this bifurcation analysis, we can understand the decision behavior of an agent's movement through space. As an agent approaches two targets, aligning its heading with their average direction, the angular separation $\theta_{12}$ increases, causing $\mu$ to grow over time. The agent thus traverses the bifurcation diagram, e.g. as pictured in Fig.\ref{fig:bifurcation_analysis}a,b, from left to right. Initially, when targets appear close together in the visual field, $\mu < \mu^*$ and the agent's neural state closely tracks the indecision equilibrium branch in the diagram. As the agent approaches and crosses $\mu^*$, it enters the ultrasensitive regime where any asymmetry - whether from slightly different target sizes, distances, or visual salience, is amplified strongly into a decisive commitment toward one target as the continuous branch of solutions in Fig. \ref{fig:bifurcation_analysis}b is tracked by the neural state of the agent. This mechanism provides a principled explanation for the characteristic trajectories observed in our experiments, in which agents sharply transitioned from movement in the average direction of observed targets to approaching a selected subset of the targets.

\subsubsection{Heuristic to detect decision points in general ND model}

As  illustrated by the low-dimensional analysis in the previous section, transition of an agent through points of \textit{bifurcation} or \textit{decision} correspond to characteristic spikes in the real part of the leading eigenvalue of the Jacobian matrix of the neural dynamics, evaluated at a nearby equilibrium point. Conjecturing that the high-dimensional system exhibits analogous bifurcation structure to its low-dimensional counterpart, this eigenvalue-based heuristic thus provides a practical tool for identifying decision points in simulations and experiments with the full neural dynamics model. The Jacobian of the neural dynamics in \eqref{eq:ndt_vector} is given by
\begin{subequations} \label{eq:Jac}
\begin{gather}
    \frac{\partial f}{\partial \mathbf{n}}(\mathbf{n}) = -I + a \alpha \  \text{diag}(\operatorname{sech}^2( \alpha \mathcal{U}^{\top}\mathcal{P}^{\top} \mathcal{P}\mathcal{U}\mathbf{n})) \mathcal{U}^{\top}\mathcal{P}^{\top} \mathcal{P}\mathcal{U},\\
    J(\mathbf{n}) = (I - \mathbf{n} \mathbf{1}^{\top}) \frac{\partial f}{\partial \mathbf{n}} - \left( \mathbf{1}^{\top} f(\mathbf{n}) \right) I.
\end{gather}
\end{subequations}
Given a particular choice of input matrix $\mathcal{U}$ and parameters $a,\alpha$, let $\lambda_{1}$ be the eigenvalue of $J(\mathbf{n})$ with largest real part, i.e. its dominant eigenvalue. 
Recall that due to the timescale separation between neural and spatial dynamics, we expect the neural state $\mathbf{n}(t)$ to closely track equilibria of the neural dynamics with induced by the corresponding input $\mathbf{u}(t)$.
Therefore it is sufficient to track $\operatorname{Re}(\lambda_1(t))$ of \eqref{eq:Jac} evaluated at the current neural state $\mathbf{n}(t)$ to detect these decision spikes.  
We verify this conjecture in the simulation studies in Section~\ref{Sec2_2}, where in Fig.~\ref{fig:sim_py}c we show how at an observed decision point in the spatial trajectories, $\lambda_{1}(t)$ indeed exhibits a spike which indicates ultra-sensitivity to input differences.
Note that the stabilized activity
patterns depend on the distribution of the visual input $\mathcal{U}$.
Consequently, changes in the vision---such as a target leaving the FOV or the robot passing a target---can also produce
transient spikes in $\operatorname{Re}(\lambda_{1})$, as observed in
Fig.~\ref{fig:sim_py}d at the moments indicated by the black dashed line.
Orange dashed lines in the same plot indicate points of decision that are driven by bifurcation in the neural dynamics.

\subsection{Simulation environment\label{sec:sim_env}}

In the simulation experiments, we employ a plug-in API in AirGen \cite{vemprala2023gridplatformgeneralrobot} that receives body-frame velocity commands generated by the proposed pipeline. A Proportional-Derivative controller regulates the yaw angle to ensure that the onboard camera consistently maximizes target occupancy within its FOV. The quadrotor’s state is recorded for visualization and analysis purposes but are not used for feedback control.

The threshold parameter $g_{\text{threshold}}$ in \eqref{eq:thresholded_n} is implemented as the instantaneous neural activity of the center pixel. In the absence of visual input, this center pixel maintains the highest baseline activity, and thus provides an adaptive estimate of the background activity level. Any pixel whose activity exceeds this center value is therefore interpreted as providing strong evidence that its corresponding direction points toward a target.

We study two simulation scenarios: static navigation in the blocks environment and multi-target tracking in the beach environment. A color-based masking technique is applied to the RGB images to extract pixels corresponding to the target color (green). For autonomous sequential navigation, a local collision-avoidance controller maintains safe and continuous operation by activating whenever a sufficient proportion of pixels in the FOV exhibit strong target evidence.

In  multi-target tracking scenario, three Astro quadrotors are considered as targets as shown in Fig.~\ref{fig:grid2_2}. Their trajectories are predefined.  Logging the states of all quadrotors slightly reduces the frame update rate to $\sim10$ Hz. To extract motion-based target evidence from the optical flow, we derive a per-pixel representation by identifying localized deviations from the background motion field. Let $\mathbf{I}(x,y)$ denote the raw optical flow and $\mathbf{I}_{bg}(x,y)$ the background flow estimated via a robust affine fit. The residual field $\mathbf{r}(x,y)=\mathbf{I}(x,y)-\mathbf{I}_{bg}(x,y)$ suppresses global camera-induced motion and large-scale background drift, isolating only scene elements whose motion diverges from the dominant image-plane pattern. Foreground motion is then detected through local statistical deviation: for each pixel, we compute the local mean and standard deviation of the normalized residual within an $5\times 5$ neighborhood. The deviation from this local baseline is quantified by comparing the magnitude of the residual at each pixel to the local standard deviation, and a pixel is classified as containing meaningful motion whenever this ratio exceeds a fixed threshold. The resulting motion-evidence field provides a spatially localized estimate of dynamic activity, which is subsequently pooled into the neural population as a per-pixel evidence signal.

\subsection{Experimental setup\label{sec:experiment}}

The experiments have been conducted in an arena of $5.5 \times 8.5$ m. The robot platform is a custom quadrotor that has a distance between its center of mass and the center of a rotor of $0.15$ m. Besides, the quadrotor has a weight of $0.9$ kg, including the battery. The battery is a $4$-cell lithium ion battery that provides a flight autonomy of $6-8$ min, depending on the flight regime. All the algorithms run on board the quadrotor; specifically, the quadrotor uses a Raspberry Pi 4 with Ubuntu 22.04 as its unique processor. The algorithms are implemented in a ROS2 humble python node that runs at consistent $10$ Hz across experiments. Images are captured with a camera characterized by a native resolution of $8$ megapixels and a FOV of $62.2 \times 48.8$ degrees. The low-level controller is the Linear Quadratic Regulator module in Frejya \cite{shankar2021freyja}. Data for visualization and statistical analysis has been acquired with a motion capture system.

\section*{Data availability}

All data needed to evaluate the conclusions in the paper are present in the paper or the Supplementary Materials.

\section*{Code availability}

The code to reproduce the results in simulation can be found at \url{https://github.com/ChuweiW/Bio-inspired-navigation}. Supplementary videos can be found at \url{https://drive.google.com/drive/folders/1tcKIke0mA300ASGlLL5C98jqLMH4uUbk?usp=drive_link}.

\section*{Acknowledgments}

The authors are grateful to Alessio Franci for helpful constructive feedback on an early draft of the manuscript. We also thank Ajay Shankar for his assistance with the quadrotor experiments.

\section*{Author contributions}

A.B. and E.S. proposed the initial idea of the research. C.W. and A.B. developed the modeling approach and analysis. C.W. implemented the proposed approach and designed the simulated experiments. E.S. conducted the hardware quadrotor experiments. 
A.P. provided technical feedback on method motivation, implementation, and evaluation.
The first manuscript draft was written by C.W., E.S. and A.B, with all authors contributing to subsequent drafts. A.B. and A.P. provided the funding. 

\section*{Funding}
E.S. and A.P. were partially funded by a Leverhulme Trust Research Project Grant. 

\section*{Competing Interests}

There are no competing interests to declare.

\printbibliography[title={References}]
\end{refsection}

\clearpage
\newpage

\begin{refsection}
\beginsupplement
\renewcommand{\figurename}{Supplementary Figure}
\renewcommand{\tablename}{Supplementary Table}

\section*{\LARGE Supplementary Material}

\section{Benchmarks}
\label{Supp:Benchmark}

We denote as $z_k = [x, v_x, y, v_y]^{\top}$ the configuration vector of the robot at time $k$, with $\mathbf{p}_k = [x, y]$ the position. The position of each target is denoted as $\mathbf{g}_i = [x_{t,i}, y_{t,i}]$. The robot is assumed to have access to its odometry and every target's position. Targets are stationary. No obstacles are present in the environment. The robot is considered to have successfully reached target $i$ if it arrives to a configuration within a small tolerance $\epsilon >0$, i.e., $||\mathbf{p}_k - \mathbf{g}_i|| \leq \epsilon$. The number of targets $K$ is equal to 2 or 3 depending on the scenario.

\subsection{Neural Dynamics}

To ensure fair comparison with respect to the benchmarked approaches, our method in Sec. 2.1 (ND) is simplified as follows. The model dynamically adapts the neural activity rates $\bar{\mathbf{n}}$ based on the relative position $\bar{\mathcal{P}} = [\bar{\mathbf{p}}_1, \bar{\mathbf{p}}_2] = \left[\frac{(\mathbf{g}_1 - \mathbf{p}_t)^{\top}}{\lVert \mathbf{g}_1 - \mathbf{p}_t \rVert}, \frac{(\mathbf{g}_2 - \mathbf{p}_t)^{\top}}{\lVert \mathbf{g}_2 - \mathbf{p}_t \rVert}\right]$ that contains unit vectors pointing from the agent to targets 1 and 2,  when $K = 2$ and $\mathcal{P} = [\bar{\mathbf{p}}_1, \bar{\mathbf{p}}_2, \bar{\mathbf{p}}_3] = \left[\frac{(\mathbf{g}_1 - \mathbf{p}_t)^{\top}}{\lVert \mathbf{g}_1 - \mathbf{p}_t \rVert}, \frac{(\mathbf{g}_2 - \mathbf{p}_t)^{\top}}{\lVert \mathbf{g}_2 - \mathbf{p}_t \rVert}, \frac{(\mathbf{g}_3 - \mathbf{p}_t)^{\top}}{\lVert \mathbf{g}_3 - \mathbf{p}_t \rVert}\right]$ when $K=3$. As a consequence, the neural dynamics used in Sec. 2.1 is 
\[
\frac{d \bar{\mathbf{n}}}{dt} = \Pi_{\Delta} \big(-\bar{\mathbf{n}} + S(\bar{\mathcal{P}}^{\top}\bar{\mathcal{P}}\bar{\mathbf{n}}) \big)
\]
where $S: \mathbb{R} \to (0,a)$ is a saturating function applied element-wise to its input vector and $\Pi_{\Delta}: \mathbb{R}_K^{+} \to \Delta_{K-1}$ maps the neural activity to the unit simplex.

\subsection{Potential Field}
The potential field (PF) algorithm uses an artificial potential field \cite{Khatib85PF} modulated by an attractive force $\mathbf{F}_{att}(\mathbf{p}_k)$ at position $\mathbf{p}$  
\[
\mathbf{F}_{att}(\mathbf{p}_k) = \sum_{i=1}^{T} (\mathbf{g}_i - \mathbf{p}_k)
\]
which pulls the robot toward the set of target positions.

\subsection{Model Predictive Control}
We compare our method with a model predictive control (MPC) framework using a point-mass model in 2D plane for the robot. We set the robot's dynamics as
\[
z_{k+1} = Az_k + Bu_k
\]
where $$A=\begin{bmatrix}
1 & \Delta t & 0 &0\\
0 & 1 & 0 & 0  \\
0 & 0 & 1 & \Delta t \\
0 & 0 & 0 & 1  \\
\end{bmatrix}, \qquad B=\begin{bmatrix}
0 & 0\\
\Delta t & 0\\
0 & 0\\
0 & \Delta t\\
\end{bmatrix}.$$ The sets of states and inputs are denoted by $\mathcal{Z}$ and $\mathcal{U}$, respectively.
Specifically, we consider two variants of MPC. The first formulation, denoted as MPC1, minimizes a weighted sum of squared distances to all targets. Let N be the prediction horizon. The path planning
can be formulated as an optimization problem:
\begin{equation}
\begin{aligned}
\min_{\mathbf{u}_{0:N-1}} \quad & \sum_{k=0}^{N-1} \sum_{\forall i \in T} \big( w_i\big(||\mathbf{p}_k - \mathbf{g}_i||^2\big) + w_u u_k^2 \big) + \phi \\
\textrm{s.t.} \quad & z_{k+1} = Az_k + Bu_k\\
  &z_{k+1}\in \mathcal{Z}, u_k\in \mathcal{U}   \\
  &k=0,1,2,..N-1   \\
\end{aligned}\label{eq:MPC1}
\end{equation}
where $w_i$ penalizes the distance to the $i$-th target and $w_u u_k^2$ penalizes large control inputs and   $\phi$ is the terminal cost:
\[
\phi = w_{\text{t}}\sum_{\forall i\in T} \|\mathbf{p}_N - \mathbf{g}_i\|^2 .
\]

To encourage decisive behavior, we introduce a smooth approximation of the minimum operator using the soft-min function. This second formulation, MPC2, is expressed as
\begin{equation}
\begin{aligned}
\min_{\mathbf{u}_{0:N-1}} \quad &
\sum_{k=0}^{N-1} \ell_k + \phi, \\
\text{s.t.} \quad & z_{k+1} = A z_k + B u_k, \\
& u_k \in\mathcal{U}, z_{k+1} \in \mathcal{Z} \\
& k=0,1,2,..N-1  
\end{aligned}
\label{eq:MPC2}
\end{equation}
The stage cost $\ell_k$ penalizes the robot’s distance to the targets and control effort
\[
\ell_k = \operatorname{smin}_{\tau}\big(\{\|\mathbf{p}_k - \mathbf{g}_i\|^2\}_{i\in T}\big) + w_u \|u_k\|^2,
\]
and  $\phi$ is the terminal cost
\[
\phi = w_{\text{t}} \operatorname{smin}_{\tau}\big(\{\|\mathbf{p}_N - \mathbf{g}_i\|^2\}_{i\in T}\big).
\]
The \emph{soft-min} function is defined by
\[
\operatorname{smin}_{\tau}(\{e_i\}) = -\tau \log \sum_{i} \exp(-e_i / \tau).
\]

\subsection{Reinforcement Learning}\label{subsec:rl}

The reinforcement learning policy is parameterized as a multi-layer perceptron (MLP) with two hidden layers of 256 neurons and hyperbolic tangent as activation function for all the layers including the output layer. The observation vector used as input to the policy is given by
$$
o_k = [z_k^{\top}, s_{k,1}^{\top}, s_{k,2}^{\top}, s_{k,3}^{\top}]^{\top},
$$
where $s_{k,i}^{\top} = [x_{t,i} - x_k, y_{t,i} - y_k, e_i]$. Furthermore, we assume that the maximum number of targets in the field of view of the robot is $3$. On the other hand, $e_i = 0$ if the target is active and $1$ otherwise. The output of the policy provides x-acceleration and y-acceleration. To train the policy, we use a version of the dispersion environment from Multi Particle Environments as implemented in the Vectorized Multi-Agent Simulator (VMAS) \cite{bettini2022vmas}. Specifically, we randomize the number of targets as either $T=2$ or $T=3$ with equal probability, the number of agents equal to $1$, the maximum number of steps per episode as $200$, and the reward function as follows:
$$
R_k = \begin{cases}
    1 & \quad \text{if } \exists i\in T \text{ s.t. } ||\mathbf{p}_k - \mathbf{g}_i||<d \text{ and } e_i=0
    \\
    -\min_{i} ||\mathbf{p}_k - \mathbf{g}_i|| & \quad \text{otherwise}
    \end{cases},
$$
with $d=0.1$ a distance threshold. The robot is trained in an arena of $[-1, 1] \times [-1, 1]$, so policy inputs and outputs are scaled after training to match the dimensions of the environment where the policy is deployed. We train the parameters of the policy using proximal policy optimization (PPO) \cite{schulman2017proximal} as implemented in BenchMARL \cite{bettini2024benchmarl}. All the parameters of the algorithm follows the default configuration of independent PPO, except that we set a batch size of $512$, a learning rate of $5\times 10^{-4}$ and $1\times 10^7$ training steps. Policy and critic share the same neural network.

\subsection{Benchmark results}

In addition to the results presented in the article, we collected the mean and standard deviation of per-step computation times for all planners in two-goal and three-goal setting, as summarized in Table~\ref{tab:comp_time1} and Table~\ref{tab:comp_time2} respectively. All benchmarks are performed on a Intel Core i9-14900K CPU, 64 GB RAM, Ubuntu 24.04 computer.
Both PF and ND model exhibit extremely low and consistent computational costs, highlighting their efficiency and scalability.
The RL policy executes a single forward pass through a fixed MLP at every control step without any iterative optimization or search. Its computational cost therefore scales only with the network size and input dimensionality which makes it significantly faster than MPC and only slightly higher than potential field or neural dynamics methods. 
In contrast, the MPC-based planners require significantly higher computation times---by several orders of magnitude---due to the repeated optimization performed at each step.
Their runtime is further influenced by parameters such as the prediction horizon length, number of targets, and the complexity of the cost function. For comparison, we also measure the computation time of the ND update using a neural population of size $64 \times 32$---sufficient to represent a field of view smaller than $180^{\circ}$. This configuration achieves per-step runtimes on the same order of magnitude as the RL network, with a mean computation time of $0.729$ms and a standard deviation of $0.091$ms over 30 runs. The reference RL network consists of an 11-dimensional input, two hidden layers of 256 units each, and a 2-dimensional output.

\begin{table}[ht!]
\centering
\caption{Computation efficiency comparison across navigation strategies derived from measured pre-reach statistics, averaged over 30 runs (2 targets). 
For each method, the mean per-step computation time, standard deviation of per-step compute time and the cumulative computation time required to reach a target are reported. 
A value of $\infty$ indicates that the robot did not reach any target. }
\begin{tabular}{lccc}
\toprule
\textbf{Method} & \textbf{Mean [ms]} & \textbf{Std [ms]} & \textbf{Total time [ms]} \\
\midrule
MPC1 (Weighted Sum, N=10)  & 52.16 & 9.27   & $\infty$ \\
MPC2 (Soft-min, N=10)      & 245.50 &  38.54 & 2059 \\
PF      & 0.01 & 0.00 & $\infty$ \\
RL     & 0.571 & 0.087 & 16.18 \\
ND  & 0.09 & 0.01 & \textbf{1.80} \\
\bottomrule
\end{tabular}
\label{tab:comp_time1}
\end{table}

\begin{table}[ht!]
\centering
\caption{Computation efficiency comparison across navigation strategies derived from measured pre-reach statistics, averaged over 30 runs (3 targets). 
For each method, the mean per-step computation time, standard deviation of per-step compute time and the cumulative computation time required to reach a target are reported. 
A value of $\infty$ indicates that the robot did not reach any target. }
\begin{tabular}{lccc}
\toprule
\textbf{Method} & \textbf{Mean [ms]} & \textbf{Std [ms]} & \textbf{Total time [ms]} \\
\midrule
MPC1 (Weighted Sum, N=15)  & 147.85 & 25.331  & $\infty$ \\
MPC2 (Soft-min, N=15)      & 662.107&  209.026& 1318 \\
PF      & 0.01 & 0.003 & $\infty$ \\
RL     & 0.503 & 0.126 & 8.6 \\
ND & 0.09 & 0.01 & \textbf{1.563} \\
\bottomrule
\end{tabular}
\label{tab:comp_time2}
\end{table}

\subsection{The decision-making process in the RL policy}\label{Supp_RL_analysis}

Among the benchmarked methods, the the RL policy is an entirely data-driven model. This implies the decision-making process of the RL policy cannot be interpreted uniquely from the structure of the MLP. As a consequence, training process and architectural choices can induce biases that bootstrap decision-making. To ensure that this is not the case, we conducted a series of statistical analysis over different variants of the RL policy and training configuration. Specifically, we study the evolution of three metrics over training, deploying each policy checkpoint in 200 random initial configurations, not necessarily symmetrical with respect to target positions, following the same specifications reported in Sec. \ref{subsec:rl}. The first metric (M1) computes the proportion of times the robot navigates to the closest target at the initial time step of the rollout. The second metric (M2) computes the distribution of closest targets at the initial time step of the rollout with respect to their appearance in the observation vector of the robot. The third metric (M3) computes the distribution of first approached target with respect to their appearance in the observation vector of the robot. An unbiased policy should always go to the initially closest target, as it is the optimal policy according to the reward structure, leading to M1=1. Besides, if M2 is a uniform distribution, M3 should be a uniform distribution, as the robot should not have any preference for targets as a function of their order in the observation vector.

As shown in Fig. \ref{fig:analysis_rl}, a policy trained following the specifications in Sec. \ref{subsec:rl} and the dispersion scenario in VMAS suffers from inductive biases. The agent, fast enough to cover the arena in a dozen time steps, does not learn to navigate to the closest target but relies on the asymmetry in the parameters of the neural network to commit to the target present in the second position of the observation vector. Whether the robot favors targets in the first, second or third position of the observation vector seems to be only dependent on the random initialization of the parameters of the MLP, since the bias is not corrected at any point of the training process. To enforce inductive bias removal, we next trained under the same exact setting but limiting the maximum speed of the robot to $0.3$m/s. As observed in Fig. \ref{fig:analysis_rl}, the inductive bias is partially corrected after 2 million training steps. However, this is achieved by modifying the dynamics of the robot, and therefore it is not a systematic methodology generalizable across environments and robot embodiments. Alternatively, we trained the same exact setting again but, this time, we introduced a random shuffling in the order of appearance of the targets in the observation vector at each time step of the rollout. Thus, we are explicitly enforcing the policy to not rely on the positioning of the target in the observation vector but simply on relative distance measures. This explicit regularization proves much more effective in removing inductive biases, as displayed in Fig. \ref{fig:analysis_rl}. Nonetheless, this approach has a main caveat: it is not intuitive, since it only emerged as an option because we explicitly tried to understand the decision-making process of the policy. Overall, this raises questions about the actual causality of decision-making in standard neural network parameterizations in RL settings. Perfect removal of inductive biases in the neural network is unfeasible unless explicit symmetry is encoded within the parameters of the network. Even the last regularization alternative was incapable of achieving M1=1. This explains why the RL policy, presented with symmetrical situations in Fig. 2, always commits to the same target from the initial step of the rollout, even under the influence of noise. Then, no decision is actually taken, as the RL policy cannot process all the targets equally and evenly.

\begin{figure}
    \centering
    \begin{tabular}{cc}
         (a) & \includegraphics[width=0.9\linewidth, valign=m]{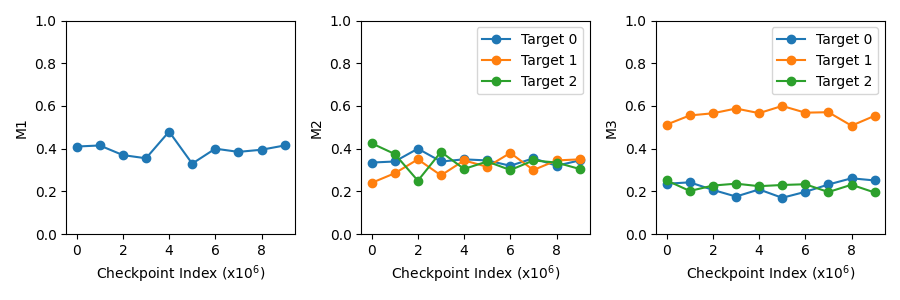} 
         \\
         (b) & \includegraphics[width=0.9\linewidth, valign=m]{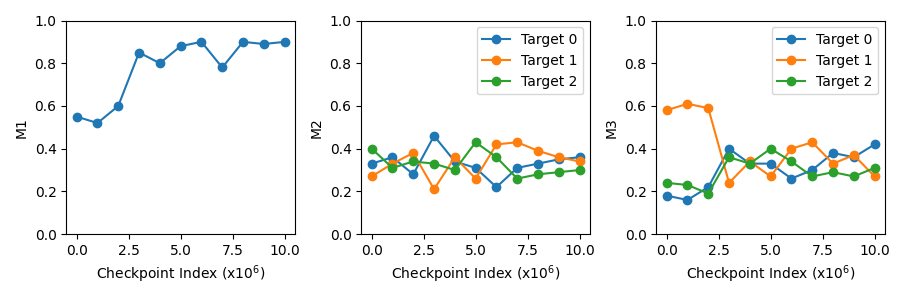}
         \\
         (c) & \includegraphics[width=0.9\linewidth, valign=m]{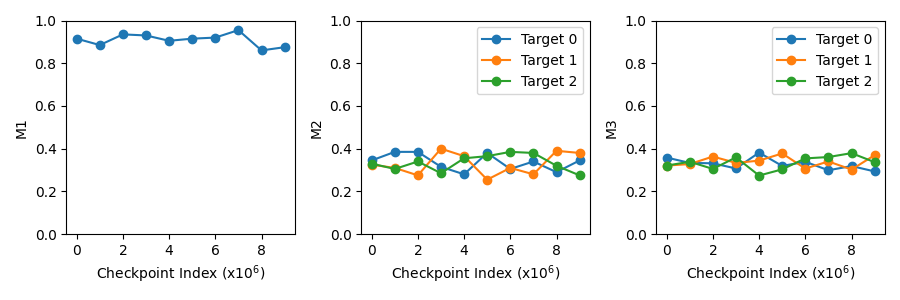}
    \end{tabular}
    \caption{\textbf{Statistical analysis of RL policies}. All the RL policies are trained as specified in Sec. \ref{subsec:rl}. \textbf{a} The policy has unbounded velocity and fixed order of targets in the observation vector. \textbf{b} The policy has bounded velocity of $0.1$m/s and the order of the targets in the observation vector is fixed. \textbf{c} The policy has bounded velocity of $0.3$m/s and the order of the targets in the observation vector is randomized at each time step. The nominal approach in \textbf{a} suffers from inductive biases that bootstrap decision-making and understanding of the optimal way of solving the task at hand i.e., moving to the target present in the $i$-th position in the observation vector instead of the initially closest target. Meanwhile, both regularization methods (limiting the dynamics of the robot and shuffling the order of the targets in the observation vector) mitigate the bias, at the expense of adding counterintuitive modifications in the dynamics of the robot and how it perceives the targets.}
    \label{fig:analysis_rl}
\end{figure}

\section{Invariance of Simplex Under Controlled Neural Dynamics}

In this section we present a proof that for any admissible input $\mathbf{u}(t)$, such as a stream of processed images from a camera, the $k$-dimensional neural decision-making dynamics are constrained to evolve on the unit simplex $\Delta_{k-1}$. We consider a generalized neural decision model as a nonlinear control system of the form
\begin{equation}
    \frac{d\mathbf{n}}{dt} = g(\mathbf{n},\mathbf{u}) := \Pi_{\Delta}(f(\mathbf{n},\mathbf{u})), \quad f(\mathbf{n},\mathbf{u})= - \mathbf{n} +S\big(W(\mathbf{u})\mathbf{n}\big), \label{eq:ndt_vector_1}
\end{equation} 
where the input $\mathbf{u}:[0,\infty) \to U \subset \mathbb{R}^m$ with $U$ nonempty and compact, the input-to-connectivity map $W: U \to \mathbb{R}^{k\times k}$ is continuous, the neural activation function $S: \mathbb{R} \to (0,\infty)$ is strictly positive and locally Lipschitz, and the simplex map $\Pi_{\Delta}$ is defined the same as in the main text as
\begin{equation}
    \Pi_{\Delta}(f(\mathbf{n},\mathbf{u})) = f(\mathbf{n},\mathbf{u}) - \left(\mathbf{1}^T f(\mathbf{n},\mathbf{u})\right) \mathbf{n}. \label{eq:simplex_projection_1}
\end{equation}
The model used in the main text is a special case of the controlled dynamics \eqref{eq:ndt_vector_1} with a specific choice of $W$ and $S$. In the following Theorem we state the invariance result for the general system \eqref{eq:ndt_vector_1}.

\begin{theorem}
    For the neural dynamics \eqref{eq:ndt_vector_1} under the stated assumptions, given any piecewise continuous input $\mathbf{u}: [0,\infty) \to U \subset \mathbb{R}^m$, and any initial condition $\mathbf{n}(0) \in \Delta_{k-1}$, every corresponding solution $\mathbf{n}(t)$ is unique and satisfies $\mathbf{n}(t) \in \Delta_{k-1}$ for all $t \geq 0$.
\end{theorem}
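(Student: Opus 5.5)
Rewriting the right-hand side with $\Pi_\Delta$ gives
\[
g(\mathbf{n},\mathbf{u}) = -\mathbf{n} + S(W(\mathbf{u})\mathbf{n}) - \big(\mathbf{1}^\top f(\mathbf{n},\mathbf{u})\big)\mathbf{n} = S(W(\mathbf{u})\mathbf{n}) - \big(\mathbf{1}^\top S(W(\mathbf{u})\mathbf{n})\big)\mathbf{n} + \big(\mathbf{1}^\top\mathbf{n}-1\big)\mathbf{n}.
\]
The plan has three steps: local existence and uniqueness via a Carathéodory-type argument, invariance of the affine hyperplane $\mathbf{1}^\top\mathbf{n}=1$, and invariance of the nonnegative orthant on that hyperplane. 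Compactness of the simplex then rules out finite-time blow-up, giving global existence.

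\textbf{Existence and uniqueness.} On each interval where $\mathbf{u}$ is continuous, the map $(t,\mathbf{n})\mapsto g(\mathbf{n},\mathbf{u}(t))$ is continuous in $t$. It is also locally Lipschitz in $\mathbf{n}$, uniformly in $t$ on compact sets. This follows because $W(U)$ is compact, since $W$ is continuous and $U$ is compact, so $\|W(\mathbf{u})\|$ is uniformly bounded. Then $S$ locally Lipschitz, together with products and sums of locally Lipschitz maps, makes $g$ locally Lipschitz in $\mathbf{n}$ with a constant independent of $\mathbf{u}\in U$. Picard--Lindelöf gives a unique solution on a maximal interval. At the (locally finitely many) discontinuity points of $\mathbf{u}$ we concatenate, restarting from the left limit of the state, which exists since the solution is continuous. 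Uniqueness propagates interval by interval.

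\textbf{Sum invariance.} Let $\sigma(t)=\mathbf{1}^\top\mathbf{n}(t)$ and $\phi(t)=\mathbf{1}^\top f(\mathbf{n}(t),\mathbf{u}(t))$. From the definition of $\Pi_\Delta$,
\[
\dot\sigma = \phi(t)\,(1-\sigma).
\]
This is a scalar linear ODE in $1-\sigma$, which integrates to $1-\sigma(t)=(1-\sigma(0))\exp\!\big(-\int_0^t\phi\big)$. Since $\sigma(0)=1$, we get $\sigma\equiv 1$ on the whole existence interval.

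\textbf{Nonnegativity.} Restricted to $\sigma=1$, the identity above gives
\[
\dot n_i = S\big((W\mathbf{n})_i\big) - \beta(t)\,n_i, \qquad \beta(t)=\mathbf{1}^\top S(W\mathbf{n}).
\]
If $n_i=0$, then $\dot n_i = S(\cdot)>0$ by strict positivity of $S$. More rigorously, $\beta(t)$ is locally integrable along the solution, so variation of constants gives
\[
n_i(t)=e^{-\int_0^t\beta}\,n_i(0)+\int_0^t e^{-\int_s^t\beta}\,S\big((W\mathbf{n})_i(s)\big)\,ds \;\ge\; 0 .
\]
Combined with $\sigma=1$, this yields $n_i\le 1$, hence $\mathbf{n}(t)\in\Delta_{k-1}$.

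\textbf{Global existence.} The solution stays in the compact set $\Delta_{k-1}$, so it cannot blow up in finite time. The maximal interval is therefore $[0,\infty)$.

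\textbf{Main obstacle.} The delicate point is treating piecewise continuous inputs, both for existence and uniqueness and for the integral formulas. I would handle this by working interval by interval, or in the Carathéodory sense, with absolutely continuous solutions. The bounds on $S(W\mathbf{n})$ are uniform because $\mathbf{n}$ stays in a compact set and $W(U)$ is compact, which is what lets all three steps go through.
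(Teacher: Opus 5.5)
Your proof is correct. The sum-invariance step is the same as the paper's: the scalar linear ODE for $\mathbf{1}^\top\mathbf{n}$, solved explicitly. The nonnegativity step takes a different route.

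\textbf{Nonnegativity.} The paper invokes a generalized Nagumo theorem for robust positive invariance. It checks that, on the hyperplane $\mathbf{1}^\top\mathbf{n}=1$, the vector field points strictly inward at two kinds of face:
\begin{itemize}
\item at $n_i=0$, where $\dot n_i=S(\cdot)>0$;
\item at $n_i=1$, where $\dot n_i=-\sum_{j\neq i}S(\cdot)<0$.
\end{itemize}
You instead observe that, once $\mathbf{1}^\top\mathbf{n}\equiv 1$, each coordinate satisfies $\dot n_i=-\beta(t)\,n_i+S\big((W\mathbf{n})_i\big)$. This is a scalar linear ODE with positive forcing, and you read off $n_i\ge 0$ from the variation-of-constants formula. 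The bound $n_i\le 1$ then follows from the sum constraint, so the paper's separate check at $n_i=1$ is redundant.

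\textbf{What each route buys.} Your route is more elementary and self-contained. It needs only local integrability of $\beta$ along the solution, so it handles piecewise continuous inputs without verifying the hypotheses of a set-invariance theorem in a time-varying setting. It also gives strict positivity $n_i(t)>0$ for $t>0$. The Nagumo route is more geometric and would carry over to other constraint sets, or to vector fields lacking this coordinate-wise linear structure.

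\textbf{Well-posedness.} You also establish local existence and uniqueness explicitly, from the uniform local Lipschitz bound given by compactness of $W(U)$ and local Lipschitzness of $S$. You then get global existence from compactness of the simplex. The theorem asserts uniqueness, but the paper's proof leaves uniqueness and forward completeness implicit, so your argument covers more of the claim than the paper's.

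One point of ordering: concatenating at the discontinuities of $\mathbf{u}$ presupposes that the solution reaches them. State your invariance estimates on the maximal interval $[0,T_{\max})$ first, then conclude $T_{\max}=\infty$, which is effectively what your last step does.
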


\begin{proof} 

    First, we show that the sum of all neural variables is one for all time.  Consider the quantity $\lambda(t) = \mathbf{1}^T\mathbf{n}(t) $. Its time derivatives reads \begin{equation}\frac{d\lambda}{dt} = \mathbf{1}^T g(\mathbf{n},\mathbf{u}) = \mathbf{1}^T f(\mathbf{n},\mathbf{u}) - (\mathbf{1}^T f(\mathbf{n},\mathbf{u}))(\mathbf{1}^T \mathbf{n}) = (\mathbf{1}^T f(\mathbf{n},\mathbf{u})) (1 - \lambda(t)). \label{eq:lambda_neural}\end{equation}
    To show that this implies $\mathbf{1}^T\mathbf{n} = 1$ for all trajectories $\mathbf{n}(t)$ of the dynamics, consider a specific trajectory $n(t)$ corresponding to a signal $u(t)$ in the neural dynamics, and let  \mbox{$h(t) = \mathbf{1}^Tf(\mathbf{n}(t),\mathbf{u}(t))$}. Under the theorem assumptions, $h \in L^1_{\text{loc}}[0,\infty)$ and $\lambda(t)$ satisfies the scalar linear ODE 
    \begin{equation}
        \frac{d\lambda}{dt} = h(t) (1 - \lambda(t)). \label{eq:lambda_ODE}
    \end{equation}
   Then the differential equation \eqref{eq:lambda_ODE} admits an explicit unique solution for the initial value problem $\lambda(0) = \lambda_0$ as
    \begin{equation}
        \lambda(t) = 1 + (\lambda_0 - 1) \exp\left( -\int_0^t h(s) ds \right). \label{eq:lambdat_sol}
    \end{equation}
Whenever $\lambda_0 = 1$, \eqref{eq:lambdat_sol} simplifies to $\lambda(t) = 1$ for all $t \geq 0$. 
This implies $\mathbf{1}^T \mathbf{n} = 1$ for all $t \geq 0$ for the neural decision dynamics.

      Next, we will show that the set $\Delta_{k-1} =\{\mathbf{n} \in \mathbb{R}^k:  n_i \geq 0, \ \sum_{\ell=1}^k n_{\ell} = 1 \}$
    is robustly positively invariant under the time-varying neural dynamics. To do this, we will show that conditions of the generalized Nagumo theorem are satisfied \cite[Theorem 4.11]{blanchini2008set} for the dynamics restricted to the hyperplane  $S = \{\mathbf{n}: \sum_{i=1}^k n_i = 1\}$. Observe that whenever $n_i = 0$, $$\frac{d n_i}{dt} = S((W(\mathbf{u})\mathbf{n})_i)) > 0$$ for any $\mathbf{u}(t) \in \mathbb{R}^m$. Analogously, whenever $n_i = 1$, $\frac{d n_i}{dt} = -1 + S((W(\mathbf{u})\mathbf{n})_i)) + 1 - \mathbf{1}^T S(W(\mathbf{u}) \mathbf{n}) = - \sum_{j \neq i} S((W(\mathbf{u}) \mathbf{n})_j) < 0$ for any $u(t) \in \mathbb{R}^m$. Then at all boundary points of $\Delta_{k-1}$, the neural state evolves into the set interior and the theorem statement follows.
\end{proof}

\section{Bifurcation Analysis for Two-Option Decision Model}

We study the two input-cluster coarse-grained model (see Methods) which, under the simplex constraint $\bar{n}_2 = 1 - \bar{n}_1$, simplifies to a scalar ordinary differential equation
 \begin{equation}
    \frac{d\bar{n}_1}{dt} = |C_1|(1-\bar{n}_1) S\big((1- c_{12})\bar{n}_1  + c_{12}\big) - |C_2|\bar{n}_1 S\big((c_{12}-1)  \bar{n}_1 + 1\big)
    \label{eq:pitchfork_unfold_n1_1}
\end{equation}
Furthermore, we define the parameters $C,C_{avg}$ such that $C_{avg} \geq  C \geq 0$, with 
\begin{equation}
    |C_1| = C_{\mathrm{avg}} + C,\quad |C_2| = C_{\mathrm{avg}} - C.
\end{equation}
The case $C = 0$ corresponds to  two equally sized clusters, whereas $C>0 \ (<0)$ represents larger (and therefore more preferred) input cluster 1 (cluster 2).
For the purposes of analysis, we define the
shifted coordinate $x=2\bar{n}_1 -1$, $x\in [-1,1]$ and using Eq.~\eqref{eq:pitchfork_unfold_n1_1}  and the observation that $\frac{dx}{dt} = 2 \frac{dn}{dt}$ write down the equivalent system
\begin{equation}
    \frac{dx}{dt} = (1-x)(1 + \bar{C}) S\left(\frac{\mu}{2}x+ \frac{2-\mu}{2}\right) - (1+x) (1 - \bar{C}) S\left(-\frac{\mu}{2} x+\frac{2-\mu}{2}\right)
    \label{eq:pitchfork_x}
\end{equation}
where $\mu = 1-c_{12} \in [0,2]$, and without loss of generality $\bar{C} = C/C_{avg}$ and $a = C_{avg}$ is the upper bound of the saturation $S$. The parameter $\bar{C} \in [-1,1]$ then measures the normalized deviation from symmetry in the inputs. In the following derivations, we denote the right-hand side of \eqref{eq:pitchfork_x} as $ f(x,\mu,\bar{C})$. Observe that the saturation function can equivalently be rewritten as
\begin{equation}
    S(y) = \frac{a}{1 + e^{-\alpha y}} = \frac{a}{2} \left( 1 + \tanh\left(\frac{\alpha y}{2}\right) \right).
\end{equation}
Then, introducing a scaled slope parameter $\tilde{\alpha} = \alpha/4$, \eqref{eq:pitchfork_x} is equivalently expressed as 
\begin{multline}
    \frac{dx}{dt} = - a x + \frac{a}{2} (1-x)(1+\bar{C}) \tanh\left( \tilde{\alpha}\left( \mu x + 2 - \mu \right) \right) \\
    + \frac{a}{2}(1+x) (1 - \bar{C})\tanh\left( \tilde{\alpha} \left( \mu x - 2 + \mu \right) \right) + a \bar{C}.  \label{eq:pitchfork_x_tanh}
\end{multline}

\subsection{Symmetric Inputs $|C_1| = |C_2|$}

From the original dynamics \eqref{eq:pitchfork_unfold_n1_1}, we observe that the compromise state $\bar{n}_1 = \bar{n}_2 = \frac{1}{2}$ is always an equilibrium whenever $|C_1| = |C_2|$, for all values of $c_{12}, a, \alpha$. This is easily verified by substitution, since at $\bar{n}_1 = \frac{1}{2}$ the derivative evaluates to 
\begin{equation}
    \frac{d \bar{n}_1}{dt} = \frac{1}{2} |C_{1}| S\left(\frac{1}{2}(1 - c_{12})\right) - \frac{1}{2} |C_1|  S\left(\frac{1}{2}(1 - c_{12})\right) = 0. 
\end{equation}
In the shifted coordinate system of \eqref{eq:pitchfork_x}, the compromise equilibrium corresponds to the zero equilibrium $x=0$. We study the local stability of $x=0$ with $\bar{C} = 0$. For this symmetric case, the dynamical system \eqref{eq:pitchfork_x_tanh} simplifies to 
\begin{equation}
    \frac{dx}{dt} = - a x + \frac{a}{2} (1-x) \tanh\left(  \tilde{\alpha}\left( \mu x + 2 - \mu \right) \right) + \frac{a}{2} (1+x) \tanh\left( \tilde{\alpha}\left( \mu x - 2 + \mu \right) \right). \label{eq:pitchfork_x_tanh_symmetric}
\end{equation}
To verify $x=0$ is an equilibrium, plugging in and leveraging odd symmetry of the hyperbolic tangent we see that 
\begin{equation}
    \frac{dx}{dt} = \frac{a}{2}\tanh( \tilde{\alpha}(2-\mu)) + \frac{a}{2} \tanh( \tilde{\alpha}(-2 + \mu)) = 0.
\end{equation}
To establish the existence of a bifurcation point
we compute the Jacobian of \eqref{eq:pitchfork_x_tanh_symmetric}, 
\begin{multline}
    J(x;\mu) = \frac{\partial f}{\partial x}(x,\mu,0) = - a + \frac{a}{2} \Big( -\tanh\left( \tilde{\alpha}\left( \mu x + 2 - \mu \right) \right) + \tanh\left( \tilde{\alpha}\left( \mu x - 2 + \mu \right) \right) \Big) \\
     +\frac{a \tilde{\alpha} \mu}{2} \Big( (1-x) \operatorname{sech}^2\left( \tilde{\alpha}\left( \mu x + 2 - \mu \right) \right) + (1+x) \operatorname{sech}^2\left( \tilde{\alpha}\left( \mu x - 2 + \mu \right) \right) \Big).\label{eq:1d_jac}
\end{multline}
The $x = 0$ is locally exponentially stable whenever $J(0,\mu)<0$ and unstable whenever $J(0,\mu) > 0$. A bifurcation point of the compromise equilibrium is a value of the parameter $\mu = \mu^*$ for which $J(0,\mu^*) = 0$. Evaluating \eqref{eq:1d_jac} at $x = 0$ we get
\begin{equation}
    J(0;\mu) = - a - a \tanh( \tilde{\alpha}(2-\mu)) + a \tilde{\alpha} \mu \operatorname{sech}^2\left(  \tilde{\alpha}\left(  2 - \mu \right) \right) 
\end{equation}
which leveraged the fact that $\tanh(-y) = -\tanh(y)$ and $\operatorname{sech}^2(-y) = \operatorname{sech}^2(y)$. Recalling the trigonometric identity $\operatorname{sech}^2(y) = 1 - \tanh^2(y)$, we can equivalently write the above as 
\begin{equation}
    J(0;\mu) = - a - a \tanh( \tilde{\alpha}(2-\mu)) + a  \tilde{\alpha} \mu (1 - \tanh^2( \tilde{\alpha}(2-\mu))). 
\end{equation}
Setting this form of the Jacobian to zero and solving for $\tanh(\alpha(2-\mu))$ using the standard quadratic formula, we find that the bifurcation condition $J(0;\mu^*) = 0$ is equivalent to the condition 
\begin{equation}
    h(\mu^*, \tilde{\alpha}) =  \tilde{\alpha} \mu^* \tanh( \tilde{\alpha}(2 - \mu^*)) -  \tilde{\alpha} \mu^* + 1 = 0. \label{eq:bif_condition_1}
\end{equation}
Observe that $h(0, \tilde{\alpha}) = 1$ and $ h(2, \tilde{\alpha}) = 1 - 2  \tilde{\alpha}$. Furthermore, 
\begin{equation}
    \frac{\partial h}{\partial \mu^*} =  \tilde{\alpha} \Big( \tanh( \tilde{\alpha}(2 - \mu^*)) - 1 -  \tilde{\alpha} \mu^* \operatorname{sech}^2 ( \tilde{\alpha}(2 - \mu^*)) \Big) < 0
\end{equation}
for all $\mu^* \in [0,2]$, $ \tilde{\alpha} > 0$. Therefore the function $h$ is monotonically decreasing over the interval $\mu^* \in [0,2]$ and by the intermediate value theorem it takes on exactly one zero in the interval if $ \tilde{\alpha} > \frac{1}{2}$ (i.e. $\alpha > 2$) and no zeros if $ \tilde{\alpha} < \frac{1}{2}$ (i.e. $\alpha < 2$). This means that whenever $\alpha > \frac{1}{2}$, the neural decision dynamics has a unique a bifurcation point $\mu^*$ defined implicitly by \eqref{eq:bif_condition_1} at which the compromise equilibrium loses stability. Furthermore, observe that
\begin{multline}
    \frac{\partial h}{\partial  \tilde{\alpha}} = \mu^* \tanh( \tilde{\alpha}(2-\mu^*)) - \mu^* + \alpha \mu^* (2 - \mu^*) \operatorname{sech}^2 ( \tilde{\alpha}(2 - \mu^*)) \Big) \\
    = -\frac{1}{ \tilde{\alpha}} + (2 - \mu^*)\left( 2 - \frac{1}{ \tilde{\alpha} \mu^*} \right) = \frac{1}{ \tilde{\alpha}\mu^*}(- \mu^* + (2 - \mu^*) (2  \tilde{\alpha} \mu^* - 1)). \label{eq:dhdalpha}
\end{multline}
At the bifurcation point characterized by \eqref{eq:bif_condition_1}, $\mu^*$ is implicitly defined by the slope parameter $\alpha$ of the neural saturation function, and we find from the bifurcation condition,
\begin{equation}
    \frac{\partial h}{\partial \mu^*} \frac{\partial \mu^*}{\partial  \tilde{\alpha}} + \frac{\partial h}{ \partial  \tilde{\alpha} } = 0 \implies \frac{\partial \mu^*}{\partial  \tilde{\alpha}} = - \frac{\partial h}{ \partial  \tilde{\alpha}} \left( \frac{\partial h}{\partial \mu^*}\right)^{-1}. 
\end{equation}
Since $\frac{\partial h}{\partial \mu^*}$ is always negative, any zeros of the sensitivity function $\frac{\partial \mu^*}{\partial  \tilde{\alpha}} $ must be zeros of $\frac{\partial h}{\partial \alpha }$ in \eqref{eq:dhdalpha}, which in turn correspond to values of $ \tilde{\alpha}, \mu^*$ satisfying $ \tilde{\alpha} = \frac{1}{\mu^* (2 - \mu^*)}$. Substituting this local minimum condition into the bifurcation expression, we find that the smallest possible bifurcation point value $\mu^* = \mu^*_{min}$ is implicitly defined as
\begin{equation}
    \tanh\left(\frac{1}{\mu^*_{min}}\right) + 1 - \mu^*_{min} = 0
\end{equation}
which evaluates to approximately $\mu^*_{min} \approx 1.5644$ at the slope $ \tilde{\alpha}^*_{min} \approx 1.4674 (\alpha_{min}^* \approx 5.8696)$. Since $\mu = 1 - \cos(\theta_{12})$ where $\theta_{12}$ is the critical angle at which the agent makes a decision between two options, this means for the coarse-grained two-option model, the smallest possible angle at which a bifurcation happens is around $\theta_{min} \approx 124^{\circ}$. 

Finally, to complete the analysis we will classify the bifurcation by expanding \eqref{eq:pitchfork_x_tanh_symmetric} in $x$ and $\mu$ around the bifurcation point $(x,\mu,\bar{C}) = (0,\mu^*,0)$ with $\mu^*$ implicitly defined by \eqref{eq:bif_condition_1}, with $\bar{C} = 0$, i.e. approximating the right-hand side of the equation with low-order terms
 \begin{equation}
    \begin{aligned}
        f(x,\mu,0)
        &= f(0,\mu^{*},0)
         + f_{x}(0,\mu^{*},0)\,x
         + f_{\mu}(0,\mu^{*},0)\,(\mu-\mu^{*})
        \\
        &\qquad
         + \frac12 f_{xx}(0,\mu^{*},0)\,x^2
         + f_{x\mu}(0,\mu^{*},0)\,x(\mu-\mu^{*})
        \\
        &\qquad
         + \frac16 f_{xxx}(0,\mu^{*},0)\,x^3
         + \mathcal{O}\!\left(|x|^4 + |x|^2|\mu-\mu^{*}|\right).
    \end{aligned}
    \label{eq:taylor_sym}
    \end{equation}
    The odd symmetry of $f(x,\mu,\bar{C})$ in $x$ and the equilibrium conditions imply $ f(0,\mu^{*},0) = 0,
    f_{\mu}(0,\mu^{*},0) = 0$, and $f_{xx}(0,\mu^{*}) = 0$. Taking derivatives of the  model \eqref{eq:pitchfork_x_tanh_symmetric} shows that
    \[
        f_{x\mu}(0,\mu^{*},0) \neq 0,
        \qquad
        f_{xxx}(0,\mu^{*},0) \neq 0.
    \]
    Thus we get the simplified expansion
    \[
    f(x,\mu,0)=f_{x\mu}(0,\mu^{*},0)(\mu-\mu^{*})x +  \frac{1}{6}f_{xxx}(0,\mu^{*},0)x^3 + \mathcal{O}\!\left(|x|^4 + |x|^2|\mu-\mu^{*}|\right).
    \]
    where 
    \begin{equation}
        f_{x\mu}(0,\mu^*,0) = 2 a  \tilde{\alpha} \operatorname{sech}^2(\tilde{\alpha} (2 -\mu^*)) \big(1 +  \tilde{\alpha} \mu^* \tanh( \tilde{\alpha} (2 - \mu^*))\big) > 0, 
    \end{equation}
and
    \begin{multline}
        f_{xxx}(0,\mu^*,0) = 2 a  \tilde{\alpha}^2 (\mu^*)^2 \operatorname{sech}^2(\tilde{\alpha} (2 -  \mu^*)) \\
        \cdot \left( 3 \tanh( \tilde{\alpha} (2 - \mu^*)) +  \tilde{\alpha} \mu^* \left( 3 \tanh^2 ( \tilde{\alpha}(2 - \mu^*)) - 1 \right) \right). 
    \end{multline}
    As long as the nondegeneracy condition $f_{xxx}(0,\mu^*,0) \neq 0$, i.e. $\left( 3 \tanh(\tilde{\alpha}(2 - \mu^*)) +\tilde{\alpha} \mu^* \left( 3 \tanh^2 ( \tilde{\alpha} (2 - \mu^*)) - 1 \right) \right) \neq 0$ is satisfied, we can conclude that the bifurcation at $(x,\mu,\bar{C}) = (0,\mu^*,0)$ is a pitchfork bifurcation using standard classification results from the singularity theory of bifurcations \cite[Proposition 9.1]{Golubitsky1985}. When $\left( 3 \tanh(\tilde{\alpha} (2 - \mu^*)) + \tilde{\alpha}a \mu^* \left( 3 \tanh^2 ( \tilde{\alpha} (2 - \mu^*)) - 1 \right) \right) < 0 (> 0)$, the bifurcation is supercritical (subcritical), i.e. as $x = 0$ loses stability, two new symmetric equilibria $x = \pm x^*$ branch off from the origin in a neighborhood of $(x,\mu) = (0.\mu^*)$, with the branches first appearing for $\mu > \mu^*$ in the supercritical case and for $\mu < \mu^*$ in the subcritical case. The positive equilibrium $x = x^* > 0$ corresponds to a decision in favor of option 1, represented by input cluster $C_1$, and the negative equilibrium $x = - x^*$ corresponds to a decision in favor of option 2, represented by input cluster $C_2$. This analysis illustrates how the decision process is able to ``flip a coin'' and make a choice in the presence of symmetric options.

\subsection{Asymmetric Inputs $|C_1| \neq |C_2|$}
So far we established that a symmetric pitchfork bifurcation organizes the neural decision process when presented with two symmetric targets. In the asymmetric case, the normalized parameter $\bar{C}$ in \eqref{eq:pitchfork_x_tanh} is nonzero and the equation no longer has an odd symmetry. For the asymmetric model, the imbalance between $|C_1|$ and $|C_2|$ implies generically $f_{\bar{C}}(0,\mu^{*},0) \neq 0$ and an the low-order terms of the expansion of the system about the bifurcation point $(x,\mu,\bar{C}) = (0,\mu^*,0)$ become
    \[
        f(x,\mu,\bar{C}) = \beta_1 (\mu-\mu^{*})x + \beta_2 x^3 + \beta_3 \bar{C} +\mathcal{O}\left(|x|^4 + |x|^2|\mu-\mu^{*}| + |x||C|\right), 
    \]
    where the nonzero low-order parameters are $\beta_1 = f_{x\mu}(0,\mu^{*},0) $, $\beta_2 = \frac{1}{6}f_{xxx}(0,\mu^{*},0) $ as in the symmetric case, and  $\beta_3 = f_{\bar{C}}(0,\mu^{*},0)$. To compute $\beta_3$ we take the derivative of \eqref{eq:pitchfork_x_tanh} with respect to $\bar{C}$, 
    \begin{equation}
        \frac{\partial f}{\partial \bar{C}} = \frac{a}{2} (1 - x) \tanh( \tilde{\alpha} (\mu x + 2 - \mu) ) - \frac{a}{2}(1 + x) \tanh( \tilde{\alpha} (\mu x - 2 +\mu) ) + a
    \end{equation}
    and at the bifurcation point this evaluates to 
    \begin{equation}
        f_{\bar{C}}(0,\mu^*,0) = a \big(\tanh(\tilde{\alpha} (2 - \mu^*)) + 1\big) > 0.
    \end{equation}
    The parameter $\bar{C}$ acts as an \textit{unfolding parameter} for the symmetric pitchfork bifurcation, breaking up the symmetric diagram in a neighborhood of its bifurcation point. For a comprehensive development of unfolding theory of bifurcations we refer the interested reader to \cite{Golubitsky1985}, with the unfolding of the pitchfork developed in Chapter I of the reference. In particular, since the expansion coefficient is positive, in a neighborhood of the bifurcation point $(x,\mu,\bar{C}) = (0,\mu^*,0)$, we expect the symmetric bifurcation diagram to break up into a continuous branch of equilibria with a larger region of attraction and a secondary branch that appears in a saddle-node bifurcation. The favored equilibrium along the continuous branch will correspond to a choice for option 1 if $\bar{C} > 0$, i.e. if input cluster 1 is bigger than input cluster 2, and conversely it will correspond to a choice for option 2 if $\bar{C} < 0$ and input cluster 2 is bigger. In this way, the decision dynamics encode ultrasensitivity to input difference, since even small mismatch between $|C_1|$ and $|C_2|$ will bias the decision in a neighborhood of the bifurcation point. Since agents moving through space have the parameter $\mu$ evolve, increasing from a pre-bifurcation value to a post-bifurcation value, the decision dynamics will pass through a neighborhood of the bifurcation point and amplify small differences between cluster inputs. Bifurcation diagrams illustrating the symmetric and asymmetric equilibrium branches in the coarse-grained model are shown in Figure 7 of the main document.

\section*{Legends}
\begin{itemize}
 
    \item \textbf{Supplementary Video 1} \textbf{Neuromorphic control for autonomous navigation and tracking.} Overview of the proposed neuromorphic control framework, which converts raw camera pixels into neural population dynamics that directly produce egocentric motion commands. Hardware experiments using an onboard-compute quadrotor demonstrate robust and repeatable decision-making in symmetric two-goal environments, even in the presence of noise and real-world disturbances. Meanwhile, AirGen simulations showcase multi-goal navigation and tracking tasks, with synchronized multi-view visualizations that reveal how neural activity evolves in real time in response to visual input.

    \item \textbf{Supplementary Video 2} \textbf{AirGen simulation of multi-target navigation in blocks environment.} First-person camera views, target-evidence maps, neural activity states, and thresholded neural representations are shown throughout a representative navigation run.
    \item \textbf{Supplementary Video 3} \textbf{AirGen simulation of dynamic multi-target tracking.} First-person camera views, target-evidence maps, neural activity states, and thresholded neural representations are shown throughout a representative navigation run.
    
\end{itemize}

\printbibliography[title={Supplementary References}]
\end{refsection}

\end{document}